\newtheorem{proposition}{Proposition}
\crefname{section}{Sec.}{Secs.}
\Crefname{section}{Section}{Sections}
\Crefname{table}{Table}{Tables}
\crefname{table}{Tab.}{Tabs.}
\newcommand{\SO}{\mathrm{SO}(3)}
\def\@fnsymbol#1{\ensuremath{\ifcase#1\or \dagger\or \ddagger\or
   \mathsection\or \mathparagraph\or \|\or **\or \dagger\dagger
   \or \ddagger\ddagger \else\@ctrerr\fi}}
\newcommand{\midsepdefault}{\aboverulesep = 0.2mm \belowrulesep = 0.2mm}
\begin{document}

\title{FisherMatch: Semi-Supervised Rotation Regression via Entropy-based Filtering}

\author{Yingda Yin \qquad Yingcheng Cai \qquad He Wang\thanks{He Wang and Baoquan Chen are the corresponding authors
(\{hewang, baoquan\}@pku.edu.cn).
} 
\qquad Baoquan Chen\footnotemark[1]\\
Peking University\\
}
\maketitle

\begin{abstract}

Estimating the 3DoF rotation from a single RGB image is an important yet challenging problem. Recent works achieve good performance relying on a large amount of expensive-to-obtain labeled data. 
To reduce the amount of supervision, we for the first time propose a general framework, FisherMatch, for semi-supervised rotation regression, without assuming any domain-specific knowledge or paired data. 
Inspired by the popular semi-supervised approach, FixMatch, we propose to leverage pseudo label filtering to facilitate the information flow from labeled data to unlabeled data in a teacher-student mutual learning framework.
However, incorporating the pseudo label filtering mechanism into semi-supervised rotation regression is highly non-trivial, mainly due to the lack of a reliable confidence measure for rotation prediction.
In this work, we propose to leverage matrix Fisher distribution to build a probabilistic model of rotation and devise a matrix Fisher-based regressor for jointly predicting rotation along with its prediction uncertainty. 
We then propose to use the entropy of the predicted distribution as a confidence measure, which enables us to perform pseudo label filtering for rotation regression. 
For supervising such distribution-like pseudo labels, we further investigate the problem of how to enforce loss between two matrix Fisher distributions.
Our extensive experiments show that our method can work well even under very low labeled data ratios on different benchmarks, achieving significant and consistent performance improvement over supervised learning and other semi-supervised learning baselines. Our project page is at  \href{https://yd-yin.github.io/FisherMatch}{https://yd-yin.github.io/FisherMatch}.

\end{abstract}

\section{Introduction}
\label{sec:intro}

Incorporating deep neural networks to perform rotation regression is exerting an ever-important influence in computer vision, graphics and robotics. 
This is now one of the key technology in enabling a multitude of applications
such as camera relocalization and visual odometry\cite{bui20206d, gojcic2020learning}, object pose estimation and tracking\cite{xiang2017posecnn, weng2021captra}, and 6DoF robot grasping\cite{jiang2021synergies, breyer2020volumetric}.
One of the major obstacles to improving rotation regression is expensive rotation annotations.
Though many large-scale image datasets have been curated with sufficient semantic annotations, obtaining a large-scale real dataset with rotation annotations can be extremely laborious, expensive and error-prone\cite{xiang2014beyond}.
With the amount of labeled data being the bottleneck, there is a demand for methods that can leverage unlabeled data.

Regarding training models with fewer labels, semi-supervised learning (SSL) has been a powerful approach, mitigating the requirement for labeled data by providing a means of leveraging unlabeled data and thus attracting more and more attention.

Recent years have witnessed many processes in semi-supervised classification\cite{laine2016temporal, tarvainen2017mean, berthelot2019mixmatch, sohn2020fixmatch, gong2021alphamatch}, semi-supervised object detection\cite{liu2021unbiased, wang20213dioumatch}, and semi-supervised human and hand pose estimation\cite{pavllo20193d, kanaujia2007semi}. However, only few works address semi-supervised rotation regression, with most of them leveraging domain-specific knowledge, \textit{e.g.}, temporal smoothness of object pose\cite{liu2021semi} and strong assumptions, \textit{e.g.}, paired images from different viewpoints\cite{mariotti2021viewnet}.

The underlying reason for little work in this field is that rotation regression is very unique and challenging. First, it is undesirable to turn the rotation regression into a classification problem.
Given that 3D rotation space is continuous, discretizing the space into a small number of bins will lead to limited accuracy, which is intolerable for many applications involving rotation estimation.
Also, rotation regression is even not a standard regression problem. Given that rotation space $\SO$ is a non-Euclidean manifold\cite{zhou2019continuity}, a general regression algorithm needs to be tailored, taking the nonlinear structure of the rotation space into account. This further makes semi-supervised rotation regression a more challenging and less studied topic.

In this work, for the first time, we propose a general framework, namely \textit{FisherMatch}, for semi-supervised rotation regression. The problem we tackle is very general: \textit{using a neural network to regress rotation from a single RGB image}. Inspired by a popular semi-supervised learning approach, FixMatch \cite{sohn2020fixmatch}, initially developed for classification tasks, we attempt to process rotation regression problems in a similar flavor.

The key idea to the success of FixMatch is \textit{to filter out the pseudo labels with low classification confidence and only supervise the model outputs with highly confident labels}.
This mechanism ensures the quality of pseudo labels and thus significantly improves the performance of semi-supervised learning.
The underlying assumption is that the more confident a pseudo label is, the more closed this label is to the ground truth.
Or, in other words, this system needs to predict a confidence that can well indicate the correctness of its prediction.
Fortunately, a classification output naturally carries the information: the probability of its prediction can be used as its prediction confidence. We argue that the availability of such a reliable confidence measure is crucial to the success of FixMatch on semi-supervised classification tasks. Similarly, when adopting FixMatch to 3D object detection, 3DIoUMatch\cite{wang20213dioumatch} constructs a separate branch to predict the 3D IoU between the predicted bounding box and the ground truth bounding box as a localization confidence to filter out poor predictions. Although 3D IoU estimation is a regression task, 3DIoUMatch can move around the predicted bounding boxes as an augmentation trick, thus creating an infinite amount of training data for this confidence estimation module. This augmentation is crucial for such a confidence estimation module since the confidence estimation modules can only be trained using labeled data and must work on unlabeled data.

However, we argue that adopting FixMatch for rotation estimation is highly non-trivial. The biggest obstacle is how to estimate the prediction confidence for rotation regression.
For rotation regression, we don't have the largest probability from the bins as our confidence; also, for rotation estimation from a single RGB image, we can't perform such augmentation to change our rotation prediction; yet, we still need this uncertainty estimation module to work on unlabeled data with only training on a small set of labeled data. 

As pointed out by \cite{prokudin2018deep}, probabilistic modeling of rotation is the correct way to model the uncertainty of rotation regression. Parametric statistical methods for orientation statistics have long been established \cite{Downs1972orientation, khatri1977mises, jupp1979maximum, prentice1978invariant}.
In order to better resort to $\SO$ manifold which has a different topology than unconstrained values in $\mathbb{R}^N$, Deng \textit{et al.}\cite{deng2020deep} and Mohlin \textit{et al.}\cite{mohlin2020probabilistic} incorporate Bingham distribution and matrix Fisher distribution respectively to automatically learn uncertainties along with predictions, without further supervision. Thus, such networks can provide valuable information about the quality of the prediction. 
We prefer matrix Fisher distribution to Bingham distribution, since its rotation representation is continuous and its loss is convex with bounded gradient magnitudes, resulting in a stable training for neural networks \cite{levinson2020analysis, mohlin2020probabilistic}.

We thus devise a matrix Fisher-based rotation regressor that takes input a single RGB image and outputs the parameter of a matrix-Fisher distribution. 
Given the predicted distribution, we propose to use the entropy of this distribution as a confidence measure for pseudo label filtering. Basically, only pseudo labels with high confidence, \textit{i.e.} lower entropy than a threshold $\tau_\text{entropy}$, will pass the filtering and be used for supervising the model under training. 
Our experiment consistently proves that entropy is an efficient indicator of the prediction performance, 
not only in the case of 100 percent labeled data, but also in low data ratio cases down to 5 percent. 
Since FisherMatch outputs a distribution rather than a single rotation, our pseudo labels become a distribution, which requires research into the unsupervised loss enforced between two distributions. In this work, we investigate cross entropy loss and negative log likelihood loss, draw a connection between them, and find their proper usage in our experiments.

On common benchmark datasets of object rotation estimation from RGB images (ModelNet10-SO(3) and Pascal3D+) under various labeled data ratios, 
our experiment demonstrates a significant and consistent performance improvement over supervised learning and other semi-supervised learning baselines.

\section{Related Work}
\label{sec:related}

\paragraph{Rotation regression}
The choice of rotation representation is one of the core issues concerning rotation regression. The commonly used representations include Euler angles, axis-angles, unit quaternions, \textit{etc.} However, Euler angles suffer from gimbal lock, and quaternions have a double embedding giving rise to the existence of two disconnected local minima. Moreover, \cite{zhou2019continuity} argues that representations less than 4 dimensions are bound to have discontinuities and are difficult for neural networks to learn. To this end, the continuous 6D representation with Gram-Schmidt orthogonalization \cite{zhou2019continuity} and 9D representation with SVD orthogonalization \cite{levinson2020analysis} have been proposed respectively, leading to superior performance in rotation regression.

Several works propose to use probability distributions over rotations to further model prediction uncertainties along with rotation regression.
In Prokudin \textit{et al.} \cite{prokudin2018deep},  parameters of a mixture of Von Mises distribution using a biternion network are estimated. Deng \textit{et al.} \cite{deng2020deep} uses Bingham distribution over unit quaternions to jointly predict the rotation as well as the uncertainty. Estimation with matrix Fisher distribution \cite{mohlin2020probabilistic} learns to build the probability distribution over rotation matrices with unconstrained parameters. To further express arbitrary rotation distributions and better tackle rotation regression for symmetry objects, Implicit-PDF \cite{murphy2021implicit} chooses to represent the distributions implicitly by neural networks, instead of distribution parameters, where the $\SO$ space is uniformly discretized with the help of Hopf fibration \cite{yershova2010generating}.

\vspace{-2mm}
\paragraph{Semi-supervised classification} 
Semi-supervised learning is a long-studied field with a diversity of approaches, many in the field of classification. 
Consistency regularization and pseudo labeling are two measures with in-depth exploration. 
Consistency regularization was first proposed in \cite{bachman2014learning} which enforces the model to predict consistently across multiple perturbations \cite{laine2016temporal, tarvainen2017mean, ke2019dual, xie2020unsupervised}.
Pseudo labels \cite{lee2013pseudo} are artificial labels generated by the model itself and are used to further train the model, often applied along with a confidence-based thresholding to ensure the pseudo label quality.
Mixmatch \cite{berthelot2019mixmatch}, ReMixmatch \cite{berthelot2019remixmatch} and FixMatch \cite{sohn2020fixmatch} are holistic methods utilizing various augmentation and label sharpening strategies.

More recently, SimPLE \cite{hu2021simple} proposes the paired loss minimizing the statistical distance between confident and similar pseudo labels. SemCo \cite{nassar2021all} considers label semantics to prevent the degradation of pseudo label quality for visually similar classes in a co-training manner.
Dash \cite{xu2021dash} and FlexMatch \cite{zhang2021flexmatch} propose dynamic and adaptive pseudo label filtering, better suited for the training process.

\vspace{-2mm}
\paragraph{Semi-supervised regression}
Semi-supervised regression is a less-touched field compared with classification, where most of the works deal with regressing Euclidean variables, e.g., Parkinson's disease rating scales from multiple telemonitoring data in UCI repository \cite{asuncion2007uci}.
Early work of CoReg \cite{zhou2005semi} utilizes multiple k-nearest neighbor regressors with different distance metrics and leverages the predictions of one regressor to label the other regressors in a co-training manner. SSDKL \cite{jean2018semi} leverages the unlabeled data by minimizing predictive variance in the posterior regularization framework through the composition of neural networks and the probabilistic modeling of Gaussian processes.

\vspace{-2mm}
\paragraph{Self-/semi- supervised rotation estimation}
Several works tackle rotation estimation in a self-supervised manner. Mustikovela \textit{et al.} \cite{mustikovela2020self} leverages the analysis-by-synthesis technique that requires a lot of extra images for training a generative model. ViewNet \cite{mariotti2021viewnet} assumes the availability of paired data (same object, different poses).
The most relevant semi-supervised learning work is NVSM \cite{wang2021neural}, which shares the same assumptions on data and labels with us. In contrary to \textit{regression}, NVSM builds a category-level 3D cuboid mesh with feature vectors and estimates the object rotation in a render-and-compare technique through the distance-based rotation retrieval.
Less literature has been seen in the field of semi-supervised rotation regression. 
Mariotti \textit{et al.} \cite{mariotti2020semi} requires paired images of an object and enforces cross-reconstruction in an analysis-and-synthesis manner via rotating the encoded neural latent variables.

Our work draws insight from both the orientation statistics and the semi-supervised learning techniques introduced above, dedicated to correlating the techniques in two well-explored fields to tackle the problem in the general setting of semi-supervised rotation regression. 

\section{Method}
\label{sec:method}

\begin{figure*}[t]
\begin{center}
\includegraphics[width=0.8\linewidth]{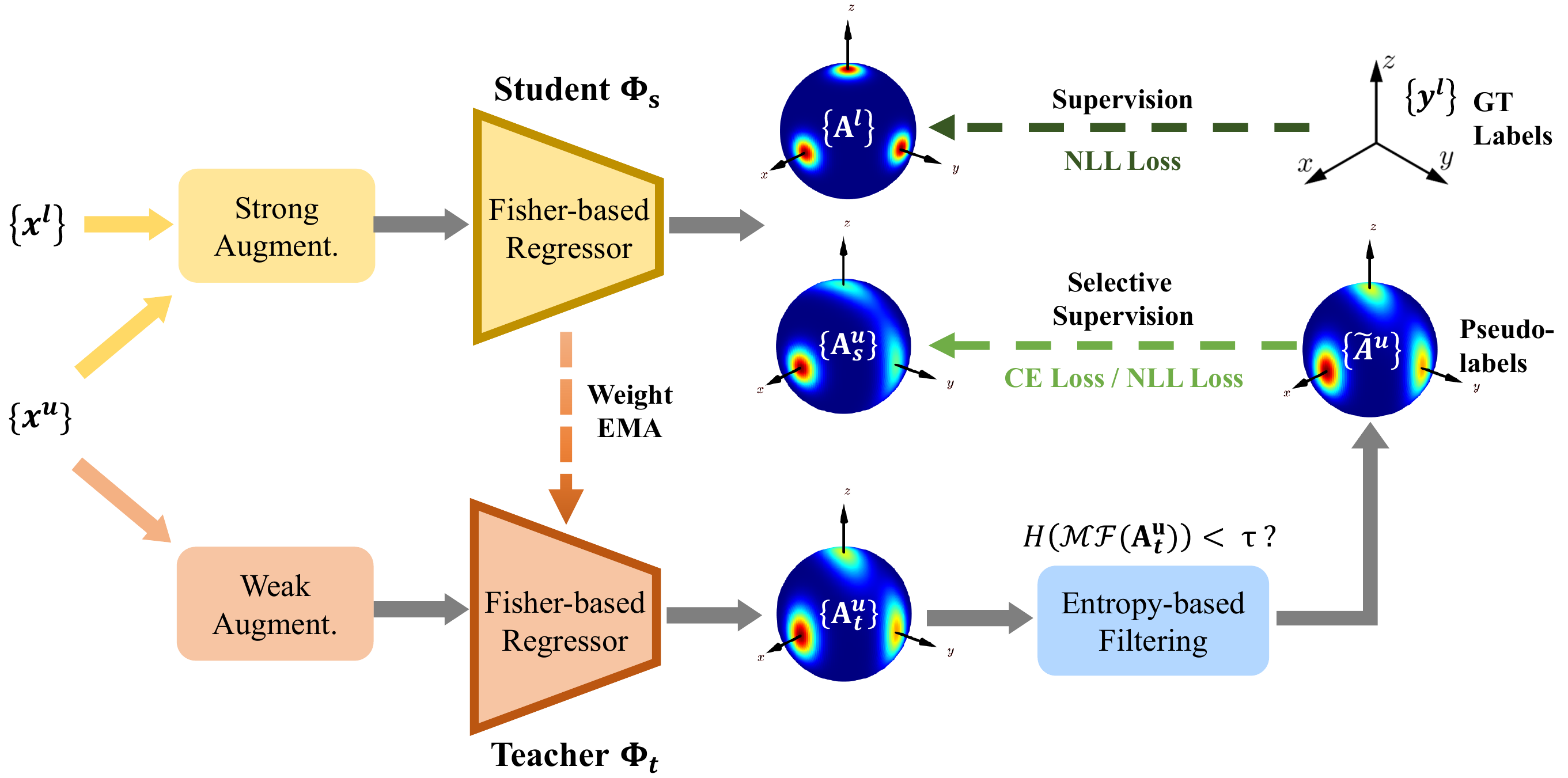}
\end{center}
\vspace{-5mm}
\caption{
\textbf{Pipeline overview.} Our matrix Fisher based-rotation regressor $\mathbf{\Phi}$ takes an RGB image $\boldsymbol{x}$ as input and outputs the parameter $\mathbf{A}$ of the predicted matrix Fisher distribution. We leverage a teacher-student mutual learning framework composed of a learnable student model and an exponential-moving-average (EMA) teacher model. On labeled data, the student network is trained by the ground-truth labels with the supervised loss; while on unlabeled data, the student model takes the pseudo labels from the EMA teacher. We leverage an entropy-based filtering technique to filter out noisy teacher predictions. The distribution visualization is borrowed from \cite{mohlin2020probabilistic} where $x$, $y$ and $z$ shown in black axes correspond to the standard basis of $\mathbb{R}^3$, and the pdf is shown on the sphere with a \textit{jet} color coding. See Appendix Section \ref{sec:supp_vis} for details of the visualization method.
}
\vspace{-2mm}
\label{fig:teaser}
\end{figure*}

In this work, we tackle the problem of learning to predict 3D object rotation from single RGB images under a semi-supervised setting,
where we have only a (small) set of labeled data $\left\{\boldsymbol{x}_{i}^{{l}}, \boldsymbol{y}_{i}^{l}\right\}_{i=1}^{N_{l}}$ and a larger set of unlabeled data $\left\{\boldsymbol{x}_{i}^{u}\right\}_{i=1}^{N_{u}}$. 
Here, $\boldsymbol{x}^{l}$ and $\boldsymbol{x}^{u}$ represent the labeled and unlabeled RGB image respectively, and $\boldsymbol{y}^{l}$ represents the ground-truth rotation in $\SO$ for a labeled data;
$N_{l}$ and $N_{u}$ are the number of labeled and unlabeled images, respectively.

Following a popular semi-supervised learning approach, FixMatch\cite{sohn2020fixmatch}, we adopt the teacher-student mutual learning framework, which we summarize in Section \ref{sec:revisit}.  In Section \ref{sec:prob_dist}, we make use of two probabilistic models of rotation for depicting the uncertainty in rotation prediction, namely Bingham distribution and matrix Fisher distribution\cite{deng2020deep,mohlin2020probabilistic}, and propose to use the entropy of the predicted matrix Fisher distribution as the prediction confidence for pseudo label filtering; In Section \ref{sec:loss}, for the purpose of enforcing loss between the teacher and the student, we construct two loss functions between pseudo labels and predicted distributions; Finally, in Section \ref{sec:protocol}, we introduce our training protocol in detail.

\subsection{Revisit FixMatch}
\label{sec:revisit}

The teacher-student mutual learning framework is a popular approach for semi-supervised learning. Mean Teacher\cite{tarvainen2017mean} proposes the first version, containing two jointly learned models - a \textit{teacher} and a \textit{student}.
The parameters of the teacher model are the exponential moving average (EMA) of the student model parameters that are updated by the stochastic gradient descent. The student model is trained by the ground-truth labels for the labeled data, and for the unlabeled data, the predictions of the teacher model serve as the \textit{pseudo labels} and are used to supervise the student network, through which, a history consistency is enforced between the two models. 

FixMatch \cite{sohn2020fixmatch} further develops this approach by proposing two strategies: asymmetric data augmentation and confidence-based pseudo label filtering. Asymmetric data augmentation means that the teacher model is fed by weakly augmented unlabeled samples while the student model takes strongly augmented unlabeled samples which contributes to the performance gap between the teacher and the student, facilitating correct information flow to the student.

Arguably, the most important contribution of FixMatch is to demonstrate the effectiveness of confidence-based pseudo label filtering. 
For a non-trivial semi-supervised learning task, previous works recognize that the pseudo labels generated by the teacher output suffer from significant noises\cite{wang20213dioumatch,sohn2020fixmatch}.
To this end, FixMatch proposes to filter out low-quality predictions and only supervise the student model with predictions with high confidence. 
This strategy avoids wrong supervision to the student model and has been proved to be very effective for challenging tasks, \textit{e.g.}, object detection\cite{liu2021unbiased, wang20213dioumatch}.
Given the difficulty of rotation regression, we further propose to leverage FixMatch as the basis of our framework for the rotation regression task.

\subsection{Probabilistic Modeling of Rotation}
\label{sec:prob_dist}
To model the uncertainty of rotation estimation, we leverage \textit{matrix Fisher} distribution to build a probabilistic model of rotation prediction, following Mohlin \textit{et al.}\cite{mohlin2020probabilistic}.

\textit{Matrix Fisher} distribution \cite{prentice1986orientation,khatri1977mises} $\mathcal{MF}(\mathbf{R};\mathbf{A})$ is a probability distribution over $\SO$ for rotation matrices, whose probability density function is in the form of 
\begin{equation}
p(\mathbf{R}) = \mathcal{{MF}}(\mathbf{R} ; \mathbf{A})=\frac{1}{F(\mathbf{A})} \exp \left(\operatorname{tr}\left(\mathbf{A}^{T} \mathbf{R}\right)\right)
\end{equation}
where parameter $\mathbf{A} \in \mathbb{R}^{3\times 3}$ is an arbitrary $3\times 3$ matrix and $F(\mathbf{A})$ is the normalizing constant. The mode and dispersion of the distribution can be computed from computing singular value decomposition of the parameter $\mathbf{A}$. Assume $\mathbf{A} = \mathbf{USV}^T$ and the singular values are sorted in descending order, the mode of the distribution is computed as 
\begin{equation}
\mathbf{\hat{R}}=\mathbf{U}\left[\begin{array}{ccc}
1 & 0 & 0 \\
0 & 1 & 0 \\
0 & 0 & \operatorname{det}(\mathbf{U} \mathbf{V})
\end{array}\right] \mathbf{V}^{T}
\end{equation}
and the singular values $\mathbf{S} = \text{diag}(s_1, s_2, s_3)$ indicates the strength of concentration. The larger a singular value $s_i$ is, the more concentrated the distribution is along the corresponding axis (the $i$-th column of mode $\mathbf{\hat{R}}$).

Another important probabilistic model for rotation is \textit{Bingham} distribution on $\mathcal{S}^3$ for unit quaternions. The probability density function is defined as

\begin{equation}
\mathcal{B}(\mathbf{q} ; \mathbf{M}, \mathbf{Z}) =\frac{1}{F(\mathbf{Z})} \exp \left(\mathbf{q}^{T} \mathbf{M} \mathbf{Z} \mathbf{M}^{T} \mathbf{q}\right)
\end{equation}
where $\mathbf{M} \in \text{O}(4)$ is a $4 \times 4$ orthogonal matrix and $\mathbf{Z} = \textrm{diag}(0, z_1, z_2, z_3)$ is a $4 \times 4$ diagonal matrix with $0\ge z_1 \ge z_2 \ge z_3$. The first column of parameter $\mathbf{M}$ indicates the mode and the remaining columns describe the orientation of dispersion while the corresponding $z_i, (i\in {1,2,3})$ describe the strength of the dispersion. $F(\mathbf{Z})$ is the normalizing constant.

It is well recognized that rotation matrix $\mathbf{R}$ and quaternion $\mathbf{q}$ are two different representations of rotation. 
Similarly, as discussed in \cite{prentice1986orientation}, matrix Fisher distribution and Bingham distribution are equivalent to each other differing only in parameterizations and rotation representations.
However, given that quaternion is not a continuous representation of rotation\cite{zhou2019continuity},  using matrix representation to learn a deep rotation estimation model has an intrinsic advantage and usually yields better performance. \cite{mohlin2020probabilistic} further shows that matrix Fisher distribution has a bounded gradient, which is favored by deep neural networks. 
Therefore, 9D rotation matrix is chosen as our representation, and matrix Fisher distribution is used for building our probabilistic rotation model.

\subsection{Entropy-based Pseudo Label Filtering}
Inspired by FixMatch, we only want the accurate predictions from the teacher model to ``teach'' the student model. Otherwise, noisy pseudo labels may slow down the training procedure, or even do harm to the whole process. 

For depicting the confidence of a predicted distribution, we propose to use \textit{entropy}, which is widely used in statistics acting as the degree of disorder or randomness in the system, as a measure of \textit{uncertainty}. A lower entropy generally indicates a more \textit{peaked} distribution which exhibits less uncertainty and higher confidence.  

In this work, we propose an entropy-based filtering mechanism leveraging the probabilistic modeling of the rotation estimation over $\SO$.
We devise a rotation regressor $\Phi$ that takes a single RGB image $\boldsymbol{x}$ and outputs the parameter $\mathbf{A}\in \mathbb{R}^{3\times3}$ of a matrix Fisher distribution 
\begin{equation}
    \mathbf{A}=\Phi(\boldsymbol{x}),
\end{equation}
which not only contains a predicted rotation as the \textit{mode} of this distribution, but also encode the information of the distribution \textit{concentration}. We then compute the entropy of this predicted distribution (see Equation \ref{eq:entropy}).

For \textit{pseudo label filtering}, we set a fixed entropy threshold $\tau$, and only reserve the prediction as a pseudo label if its entropy is lower than the threshold. Specifically, for unlabeled data $\boldsymbol{x}^u$, 
assume $p_t=\mathcal{MF}(\mathbf{A}^u_t)$ is the teacher output with $\mathbf{A}^u_t=\mathbf{\Phi}_t(\boldsymbol{x}^u)$ and
$p_s=\mathcal{MF}(\mathbf{A}^u_s)$ is the student output with $\mathbf{A}^u_s=\mathbf{\Phi}_s(\boldsymbol{x}^u)$ , the loss on unlabeled data is therefore:
\begin{equation}
\label{eq:unsuper}
L_u\left(\boldsymbol{x}^{u}\right) =  \mathbbm{1}  \left(H(p_t) \leq \tau\right) L\left(p_t, p_s\right)
\end{equation}
We discuss the loss function enforced between two distribution $L\left(p_t, p_s\right)$ in Section \ref{sec:loss}.

\subsection{Loss Function between Distributions}
\label{sec:loss}
For the labeled set $\left\{\boldsymbol{x}_{i}^{l}, \boldsymbol{y}_{i}^{l}\right\}_{i=1}^{N_{l}}$, we adopt the most common loss function, negative log likelihood (NLL) loss, to learn the probabilistic model of rotation, as in \cite{deng2020deep, mohlin2020probabilistic}. This loss minimizes the negative log likelihood of the ground-truth rotation in the predicted distributions, as shown below:
\begin{equation}
\begin{aligned}
\label{eq:nll_super}
L_l\left(\boldsymbol{x}^{l}, \boldsymbol{y}^{l}\right)&=-\log \left(\mathcal{MF}\left(\boldsymbol{y}^{l}; \mathbf{A}^{l})\right)\right) \\
\end{aligned}
\end{equation}
where $\mathbf{A}(\boldsymbol{x}^{l})$ denotes the network output fed with input $\boldsymbol{x}^{l}$.

For unlabeled data, both our network predictions and pseudo labels are distributions, and thus we need to enforce loss between two distributions, which is rarely the case for a regression problem.
We investigate two types of losses, 
\textit{i.e.}, negative log likelihood (NLL) loss and cross entropy (CE) loss. 

\noindent\textbf{Cross Entropy Loss $L^\text{CE}$} \ In classification problems, a widely-used loss function between two discrete distributions is cross entropy loss $L^\text{CE}$, whose gradient is equivalent to the gradient of KL divergence between two distribution \cite{glover2014quaternion}. We thus extend cross entropy loss $L^\text{CE}$ so as to enforce the consistency between pseudo labels and the student outputs:
\begin{equation}
\label{eq:ce_unsuper}
L^\text{CE}\left(p_t, p_s\right)=H\left(p_t, p_s\right)
\end{equation}

To compute $L^\text{CE}$ between two continuous distribution on $\SO$, we derive the analytical formula for the cross entropy between two matrix Fisher distributions $f\sim\mathcal{MF}(\mathbf{A}_f)$ and $g\sim\mathcal{MF}(\mathbf{A}_g)$, as shown below:

Assume $\mathbf{A}_f=\mathbf{U}_f\mathbf{S}_f \mathbf{V}_f^T$, $\mathbf{A}_g=\mathbf{U}_g\mathbf{S}_g \mathbf{V}_g^T$, $\gamma$ is the standard transform from unit quaternion to rotation matrix, $\mathbf{e}_i$ is the $i$-th column of $\mathbf{I}_4$, and $\mathbf{E}_i=\gamma({\mathbf{e}_i})$,
then we can derive
\begin{equation}
\small
\begin{split}
H(f, g)=\log F_g-\sum_{i=1}^{4} z_{gi}
\left(b_{i}^{2}+\sum_{j=1}^{4}\left(a_{i j}^{2}-b_{i}^{2}\right) \frac{1}{F_{f}} \frac{\partial F_{f}}{\partial z_{f j}}\right)
\end{split}
\end{equation}
\vspace{-4mm}
where 
{
\small
\begin{equation*}
z_{gi}=\operatorname{tr}(\mathbf{E}_i^T  \mathbf{S}_g) \qquad
z_{fj}=\operatorname{tr}(\mathbf{E}_j^T \mathbf{S}_j )
\end{equation*}
\begin{equation*}
a_{ij} = \gamma^{-1}(\mathbf{U}_f \mathbf{E}_i \mathbf{V}_f^T) \cdot \gamma^{-1}(\mathbf{U}_g \mathbf{E}_j \mathbf{V}_g^T) 
\end{equation*}
\begin{equation*}
b_{i} = \gamma^{-1}(\mathbf{U}_f \mathbf{E}_i \mathbf{V}_f^T) \cdot \gamma^{-1}(\mathbf{U}_g \mathbf{E}_i \mathbf{V}_g^T)
\end{equation*}
}and $F_f$ and $F_g$ are constant wrt. parameter $\mathbf{Z}$. See Appendix Section \ref{sec:supp_math} for the derivation. Note that when $f = g$, we can also get the entropy $H(f)$ for matrix Fisher distribution, as shown below:
\begin{equation}
\label{eq:entropy}
H(f)=\log F_f-\sum_{i=1}^{4} \left( z_{fi}  \frac{1}{F_{f}} \frac{\partial F_{f}}{\partial z_{fi}} \right)
\end{equation}

\noindent\textbf{NLL Loss $L^\text{NLL}$} \ Another option of the loss is to consider the negative log likelihood of the mode predicted by the teacher in the distribution predicted by the student, which is basically the NLL loss treating the teacher prediction as ground truth, as in the case of labeled data.

\begin{equation}
\label{eq:nll_unsuper}
\begin{aligned}
L^\text{NLL}\left(p_t, p_s\right)&=-\log p_s(\boldsymbol{y}_t^u),
\end{aligned}
\end{equation}
where $\boldsymbol{y}_t^u$ is the mode predicted by the teacher and can be computed by SVD of $\mathbf{A}_t^u$ (see Section \ref{sec:prob_dist}).

\noindent\textbf{Relationship between  $L^\text{NLL}$ and $L^\text{CE}$}
Here we intend to make connection between $L^\text{NLL}$ and $L^\text{CE}$. We find that $L^\text{CE}$ becomes $L^\text{NLL}$ when we decreases the dispersion of the distribution $p_t$ to a Dirac distribution $\delta(\mathbf{R}; \boldsymbol{y}_t^u)$ with its mode located at $\boldsymbol{y}_t^u$. 
We give a brief proof as below:
{
\small
\begin{equation*}
    \begin{aligned}
    L^\text{CE}\left(\text{Dirac}(p_t), p_s\right)&=
    H\left(\delta(\boldsymbol{y}_t^u), p_s\right)\\
     &=-\int_{\SO} \delta(\boldsymbol{y}_t^u)\log p_s \text{d}\mathrm{R} \\
    &=-\log p_s(\boldsymbol{y}_t^u) = L^\text{NLL}\left(p_t, p_s\right).
    \end{aligned}
\end{equation*}
}This exactly resembles the label sharpening technique used in semi-supervised classification\cite{berthelot2019mixmatch, sohn2020fixmatch}, where the teacher's output is either sharpened or turned into a hard label. To be specific, when we turn a predicted distribution $\mathcal{MF}(\mathbf{A}_{t}^u)$ into a hard label $\boldsymbol{y}_t^u$, $L^\text{CE}$ becomes $L^\text{NLL}$.
We use $L^\text{CE}$ in the experiments and investigate the different behavior of these two losses in Section \ref{sec:analysis}.

\subsection{Training Protocol}
\label{sec:protocol}
Our training is composed of two stages: a pre-training stage, where we train our rotation regressor on the labeled data, followed by an SSL stage where both the labeled and the unlabeled data are utilized. 
Our {matrix Fisher-based} rotation regressor is fed with an RGB image $\boldsymbol{x}$ and outputs a $3\times3$ matrix $\mathbf{A}$ as the predicted parameter of the matrix Fisher distribution. {We take the mode of the distribution as the predicted value.}

\noindent\textbf{Pre-training} \; We start with a supervised training procedure on the labeled set with the supervised loss as Eq. \ref{eq:nll_super}. 
We clone the rotation regressor to obtain a pair of teacher and student networks with the same initialization, once converged.

\noindent\textbf{Semi-supervised training} \;
In SSL stage, we utilize both the labeled data and the unlabeled data. A training batch contains a mixture of $\left\{\boldsymbol{x}_{i}^{l}\right\}_{i=1}^{B_{l}}$ labeled samples and $\left\{\boldsymbol{x}_{i}^{u}\right\}_{i=1}^{B_{u}}$ unlabeled samples. The loss function is composed of the supervised loss applied to the labeled samples and the unsupervised loss for the unlabeled samples
\begin{equation}
L=L_{l}\left(\boldsymbol{x}^{l},\boldsymbol{y}^{l}\right)+
\lambda_{u} L_{u}\left(\boldsymbol{x}^{u}\right)
\end{equation}
where $L_l$ is computed as Eq. \ref{eq:nll_super}, $L_u$ is as Eq. \ref{eq:ce_unsuper},
and $\lambda_u$ is the unsupervised loss weight.

In this stage, We adopt asymmetric augmentation and an exponential-moving-average teacher as stated in Sec. \ref{sec:revisit}.

\section{Experiment}
\label{sec:exp}

{\renewcommand{\arraystretch}{0.95}
\begin{table*}[t]
  \footnotesize
  \setlength{\tabcolsep}{5mm}
  \centering
  \caption{ 
  \textbf{Comparing our proposed FisherMatch with the baselines on ModelNet10-SO(3) under different ratios of labeled data.} 
  }
    \begin{tabular}{c|l|cc|cc}
    \multicolumn{1}{c|}{\multirow{2}{*}{Category}} & \multicolumn{1}{c|}{\multirow{2}{*}{Method}} & \multicolumn{2}{c|}{5\%} & \multicolumn{2}{c}{10\%} \\
\cmidrule{3-6}        
&       & \multicolumn{1}{c}{Mean$\downarrow$} & \multicolumn{1}{c|}{Med.$\downarrow$} & \multicolumn{1}{c}{Mean$\downarrow$}   &\multicolumn{1}{c}{Med.$\downarrow$}  \\
    \midrule
    \multirow{4}[5]{*}{Sofa} & Sup.-L1 \cite{levinson2020analysis}       &   44.64   &   11.42  &    32.65  &    9.03   \\
& Sup.-Fisher \cite{mohlin2020probabilistic}         &   45.19   &   13.16  &   32.92  &    8.83      \\
& SSL-L1-Consist.        &   36.86   &   8.65 &     25.94  &    6.81        \\
& {SSL-FisherMatch}        &   \textbf{32.02}   &   \textbf{7.78}     & \textbf{21.29}  &    \textbf{5.25}       \\
\cmidrule{2-6} 
& Full Sup.   &    18.62  &    5.77 &  18.62  &    5.77 \\
    \midrule
    \multirow{4}[5]{*}{Chair} & Sup.-L1 \cite{levinson2020analysis}      &    40.41  &    16.09  &   29.02  &    10.64     \\
& Sup.-Fisher \cite{mohlin2020probabilistic}         &    39.34  &    16.79     &28.58  &    10.84    \\
& SSL-L1-Consist.        &    31.20  &    11.29    &23.59  &    8.10       \\
& {SSL-FisherMatch}        &    \textbf{26.69}  &    \textbf{9.42}       &\textbf{20.06}  &    \textbf{7.44}       \\
\cmidrule{2-6} 
& Full Sup. &    17.38  &    6.78 &    17.38  &    6.78 \\
\bottomrule
    \end{tabular}
  \label{tab:main}
\end{table*}
}
{\renewcommand{\arraystretch}{1.}
\begin{table*}[t]
  \centering
  \footnotesize 
  \setlength{\tabcolsep}{4mm}
  
  \caption{\textbf{Comparing our proposed FisherMatch with the baselines on the 6 categories of Pascal3D+ dataset with few annotations (7, 20, 50 images).} The results are averaged on 6 categories.}
  \vspace{-1mm}
    \begin{tabular}{l|cc|cc|cc}
    \multicolumn{1}{c|}{\multirow{2}{*}{Method}} & \multicolumn{2}{c|}{7} & \multicolumn{2}{c|}{20} & \multicolumn{2}{c}{50} \\
    \cmidrule{2-7}
    & Med.$\downarrow$ &Acc$_{30^\circ}$$\uparrow$    & Med.$\downarrow$ & Acc$_{30^\circ}$$\uparrow$ &  Med.$\downarrow$ &Acc$_{30^\circ}$$\uparrow$ \\
    \midrule
    Res50-Gene & 39.1  & 36.1  & 26.3  & 45.2  & 20.2  & 54.6 \\
    Res50-Spec & 46.5  & 29.6  & 29.4  & 42.8  & 23.0    & 50.4 \\
    StarMap\cite{zhou2018starmap} & 49.6  & 30.7  & 46.4  & 35.6  & 27.9  & 53.8 \\
    NeMo\cite{wang2021nemo}  & 60.0    & 38.4  & 33.3  & 51.7  & 22.1  & 69.3 \\
    NVSM\cite{wang2021neural}  & 37.5  & 53.8  & 28.7  & 61.7  & 24.2  & 65.6 \\
    {FisherMatch} & \textbf{28.3}  & \textbf{56.8}  & \textbf{23.8}  & \textbf{63.6}  & \textbf{16.1}  & \textbf{75.7} \\
    \midrule
    Full Sup. &8.1 & 89.6  &8.1 & 89.6 &8.1 & 89.6  \\
    \bottomrule
    \end{tabular}
  \label{tab:pascal}
\end{table*}
}

\subsection{Datasets}

\textbf{ModelNet10-SO(3)}\cite{liao2019spherical} is created by rendering 3D models of ModelNet-10 \cite{wu20153d} that are rotated by uniformly sampled random rotations in $\SO$.
Following \cite{mohlin2020probabilistic,chen2021projective}, we focus on the \texttt{chair} and \texttt{sofa} category which exhibit the least rotational symmetries in the dataset. 
In the experiments, we set the ratio of labeled data as 5\% and 10\% of the training set.

\textbf{Pascal3D+}\cite{xiang2014beyond} contains real images from Pascal VOC and ImageNet of 12 rigid object classes. Following NVSM \cite{wang2021neural}, we evaluate 6 vehicle categories (\texttt{aeroplane}, \texttt{bicycle}, \texttt{boat}, \texttt{bus}, \texttt{car}, \texttt{motorbike}) which have relatively evenly distributed poses in azimuth angles, and set the number of labeled images as 7, 20 and 50 for each category respectively. We share the same selected 7 images as NVSM such that they are spread around the pose space. 

We follow the original train-test split and further divide the training split into the labeled set with ground truth and the unlabeled set without ground truth.

\subsection{Evaluation setup}

\paragraph{Baselines}

To the best of our knowledge,  we are the first to tackle semi-supervised rotation {regression} in this setting, hence the comparisons are made with self-made baselines. \textbf{Supervised-L1} uses a normal regressor and only trains on the labeled set with L1 loss {with the 9D-SVD\cite{levinson2020analysis} rotation representation}, while \textbf{Supervised-Fisher} uses our matrix Fisher regressor and also only go through the pretraining stage.
As an SSL baseline, \textbf{SSL-L1-Consistency} refers to adopting FixMatch into the task with the EMA teacher and asymmetric data augmentation preserved, but only applying L1 loss as the consistency supervision between the student and teacher predictions without filtering, due to lacking the confidence measure. Here, for non-Fisher regressors, we choose L1 instead of L2 loss, as \cite{deng2020deep} points out that L1 outperforms L2 for rotation regression.

We find the most relevant work to ours is {NVSM} \cite{wang2021neural}, which, though not regression-based, 
tackles the same task as ours and leverages a render-and-compare scheme through
distance-based rotation retrieval.  We borrow \textbf{NVSM} and their developed baselines as our compared baselines, including two supervised rotation estimation works (\textbf{StarMap}\cite{zhou2018starmap} and \textbf{NeMo}\cite{wang2021nemo}) and two standard classification networks (\textbf{Res50-Gene} and \textbf{Res50-Spec}), adapted into semi-supervised learning, respectively. Due to the unavailability of the training code
, we exactly follow the experiment settings of NVSM and evaluate on Pascal3D+ dataset. See Appendix Section \ref{sec:supp_settings} for more details.

\vspace{-2mm}
\paragraph{Evaluation metrics}
We evaluate the experiments by the mean error, the median error (in degrees) and the accuracy within 30$^\circ$ between the prediction and the ground truth.

\subsection{Results}
\begin{figure}[t]
	\begin{center}
		\begin{tabular}{cc}
			\hspace{-4mm} \includegraphics[width=0.48\linewidth]{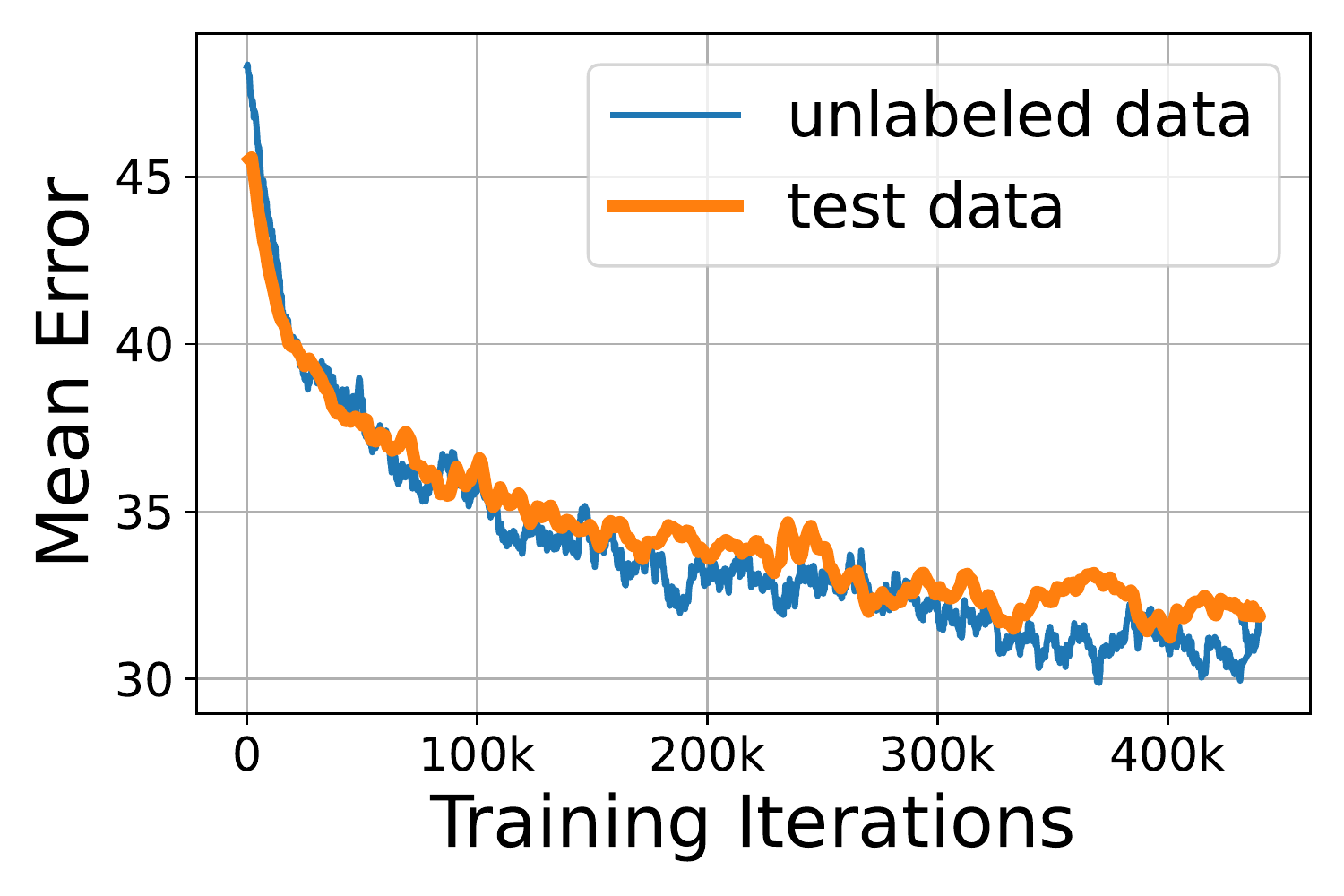} \hspace{-2mm}
			& \includegraphics[width=0.48\linewidth]{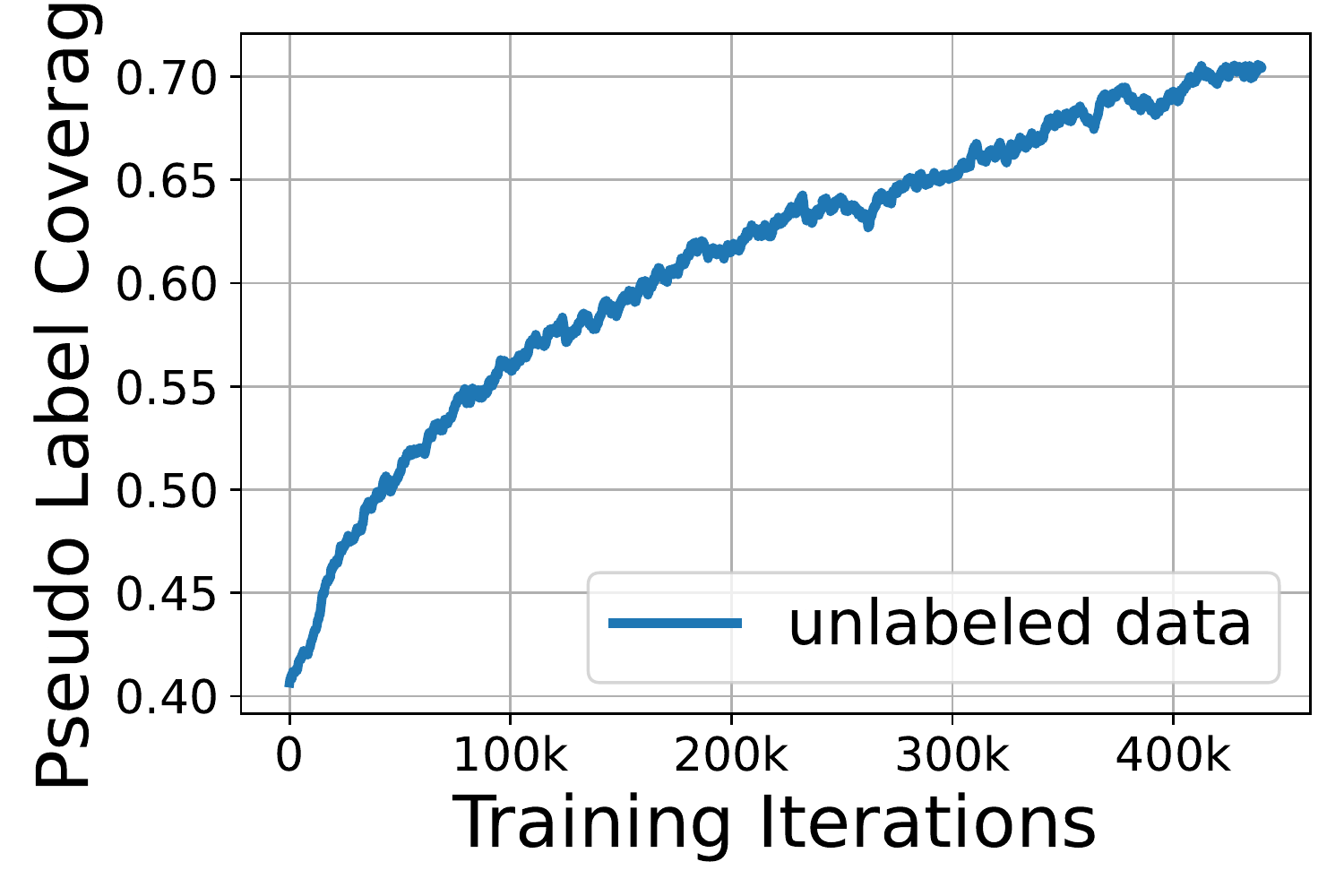}
			\\
			\hspace{-4mm} \includegraphics[width=0.48\linewidth]{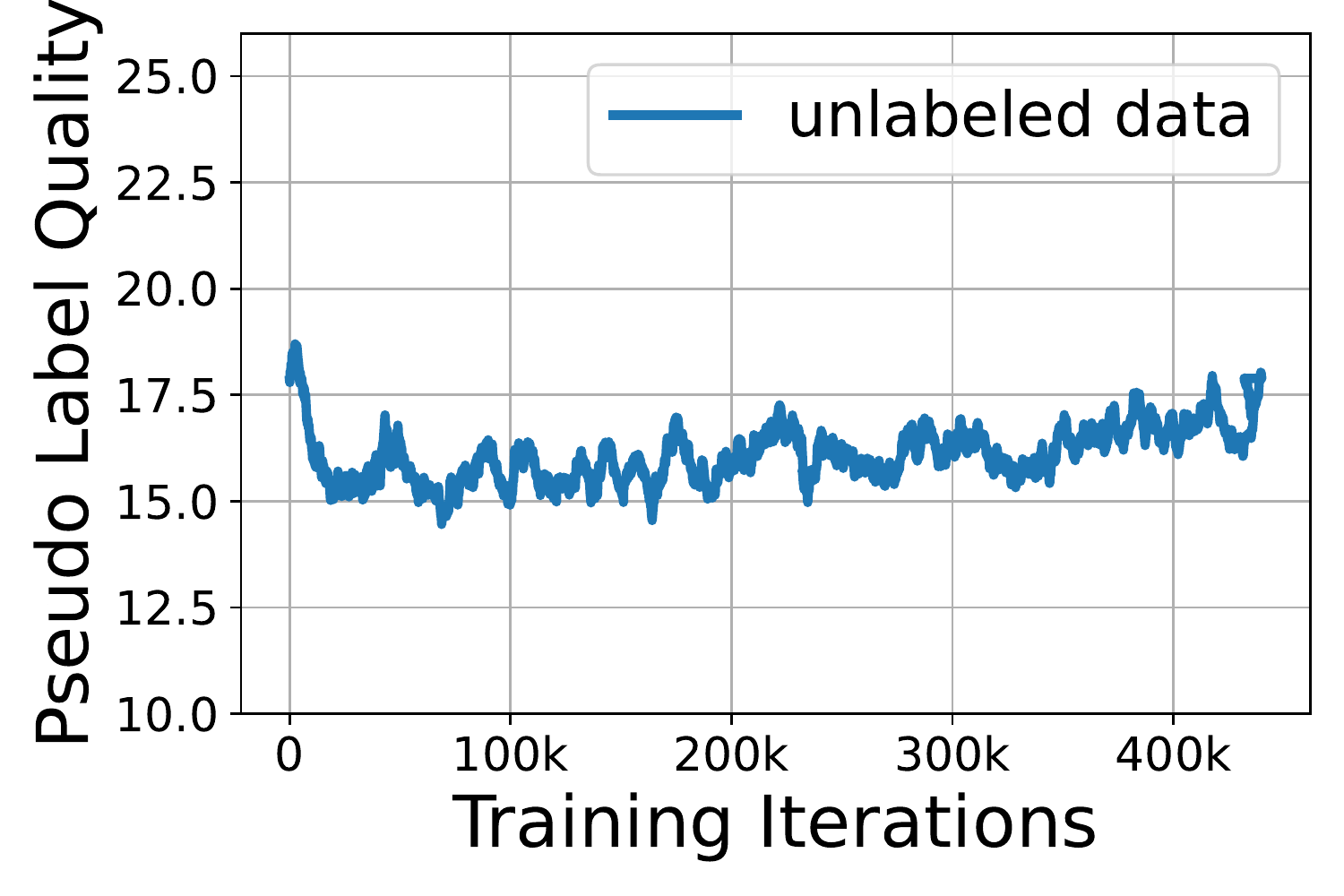} \hspace{-2mm}
			& \includegraphics[width=0.48\linewidth]{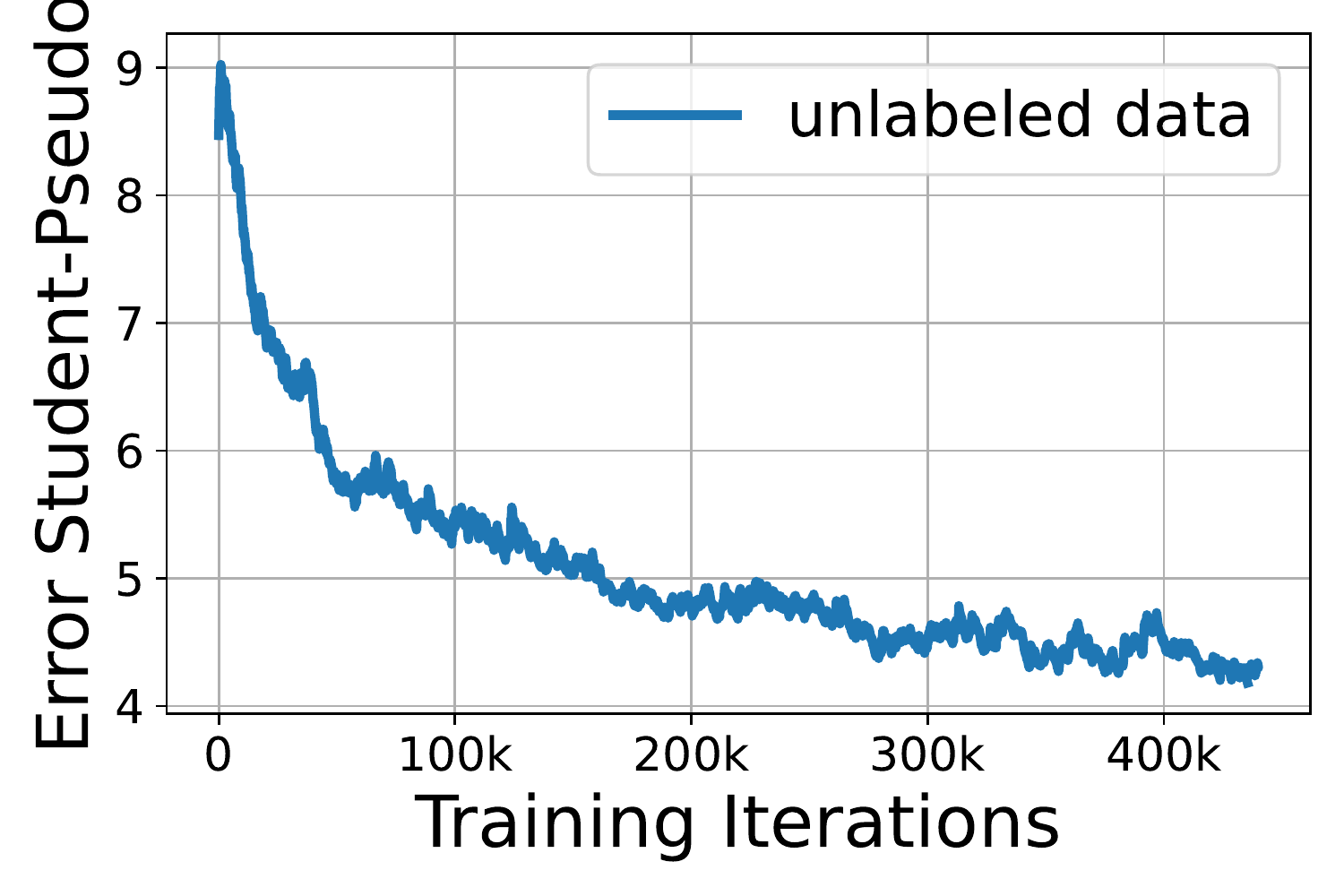}
		\end{tabular}
	\end{center}
	\vspace{-5mm}
	\caption{
	\textbf{The visualization of the training process} of SSL-FisherMatch on ModelNet10-SO(3) Sofa dataset with 5\% labeled data. 
	The four plots, from left to right and from top to bottom, show the mean errors of the predictions, the pseudo label coverage, the pseudo label quality represented by the mean errors of the pseudo labels, and the mean errors between the student model and the corresponding pseudo labels, in the process of training.
	All the errors are measured in degrees.
	}
	\label{fig:process}
	\vspace{-4mm}
\end{figure}

\paragraph{Result comparison}

Table \ref{tab:main} shows the results of our method compared with baselines on ModelNet10-SO(3)
under different labeled data ratios. 
We can see that the results of supervised learning with the labeled data only 
perform similarly, regardless of using a normal or a Fisher regressor. Since these models are in fact the pre-trained models for the SSL methods in the SSL stage, their similar performance sets a common basis for a fair comparison in the SSL stage.
For methods that undergo a second SSL stage, our proposed \textbf{FisherMatch} method consistently outperforms the baseline SSL method \textbf{SSL-L1-Consistency}, which demonstrates the importance of performing pseudo label filtering.

The experiment results on Pascal3D+ dataset are shown in Table \ref{tab:pascal}. The results illustrate that, with the effective teacher-student mutual learning framework as well as the entropy-based pseudo label filtering scheme, our algorithm significantly outperforms the state-of-the-art baselines under all different numbers of labeled images.

\paragraph{Training process analysis}
Here we show how our SSL method works during the training. In Fig. \ref{fig:process}, the upper left plot shows that the performance of the unlabeled data increases together with the test data, which indicates the increasing quality of the teacher predictions. We can also note that the performance on the unlabeled data is slightly better than that of the test data, which is sometimes referred to as \textit{transductive semi-supervised learning}. 

We also show the changes over the training process of the pseudo label coverage, the pseudo label quality, and the error between the student predictions and the corresponding pseudo labels, respectively. Here, we refer to \textit{pseudo labels} as the teacher predictions that pass the entropy threshold. The pseudo label coverage means the percentage of teacher predictions that pass the confidence threshold. The pseudo label quality simply means the error of the pseudo labels to the ground truth. 

As shown in the curves, as the SSL goes on, the improving model leads to more confident predictions indicated by the decreasing entropy and increasing pseudo label coverage, which in return fuels the learning process. The coverage of pseudo labels increases by a large margin from $40\%$ to the final $70\%$, while the pseudo label quality still keeps stable with a shaking around 2.5$^\circ$. This indicates that entropy always acts as a good indicator of performance during the whole process. The error of the student model to the pseudo labels keeps decreasing, which further proves the effectiveness of our unsupervised loss.

\subsection{Ablation Study}
\label{sec:analysis}

\paragraph{Effect of Different Unsupervised Loss and Entropy Threshold}
\begin{figure}[t]
\begin{center}
\includegraphics[width=0.9\linewidth]{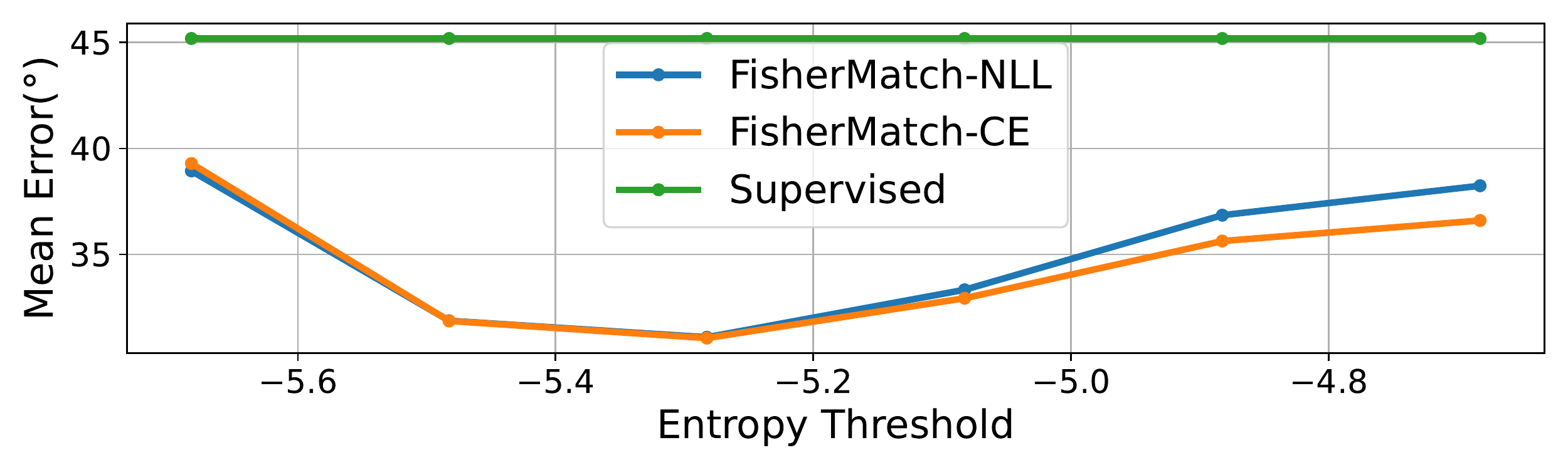}
\end{center}

\vspace{-5mm}
\caption{\textbf{The performance of FisherMatch with CE or NLL unsupervised losses with different entropy thresholds.} The experiments are done on ModelNet10-SO(3) Sofa dataset with 5\% labeled data.}
\vspace{-2mm}
\label{fig:ce_nll}
\end{figure}

\begin{figure}[t]
	\begin{center}
		\begin{tabular}{ccc}
			\hspace{-2mm}
			\includegraphics[width=0.31\linewidth]{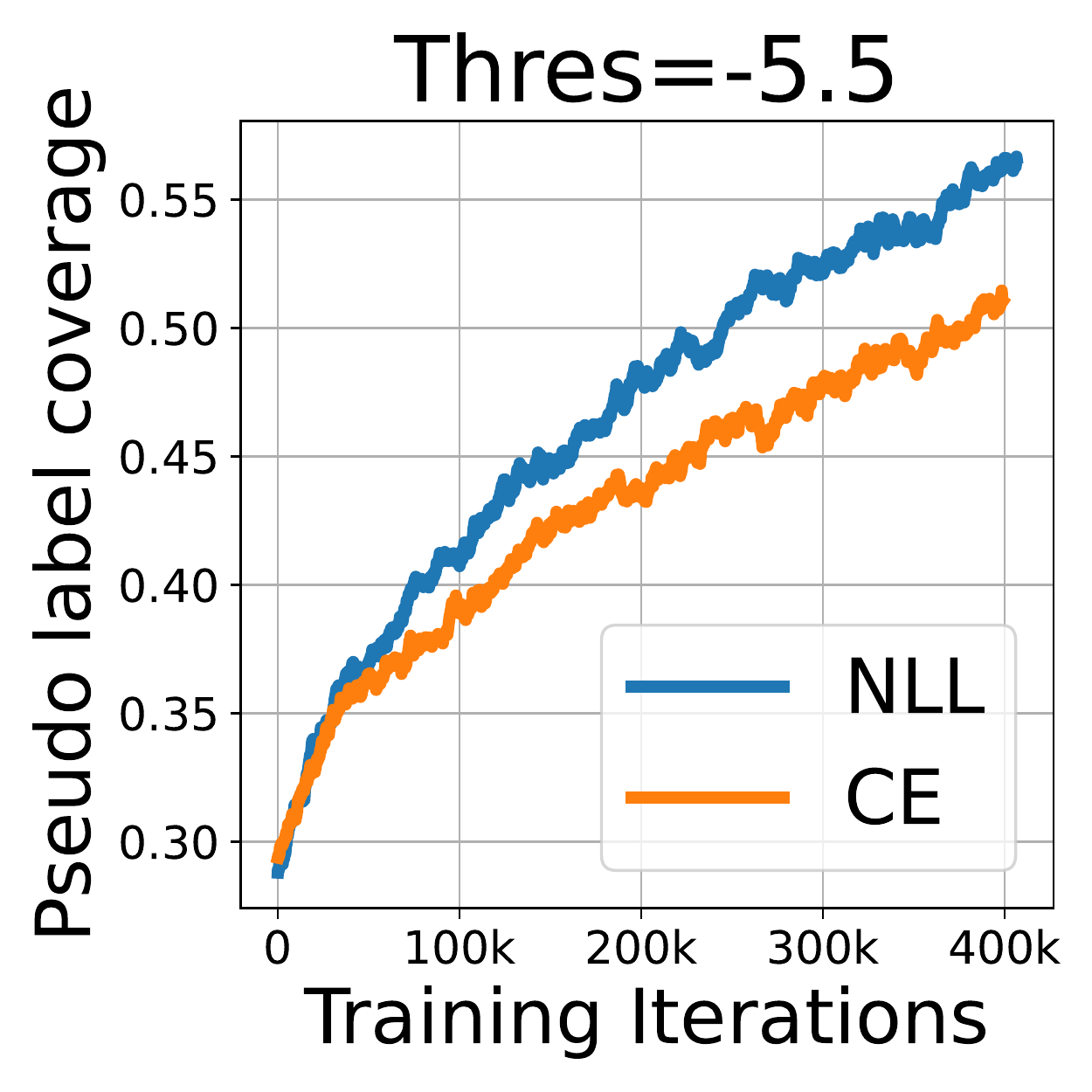} \hspace{-3mm}
			& \includegraphics[width=0.31\linewidth]{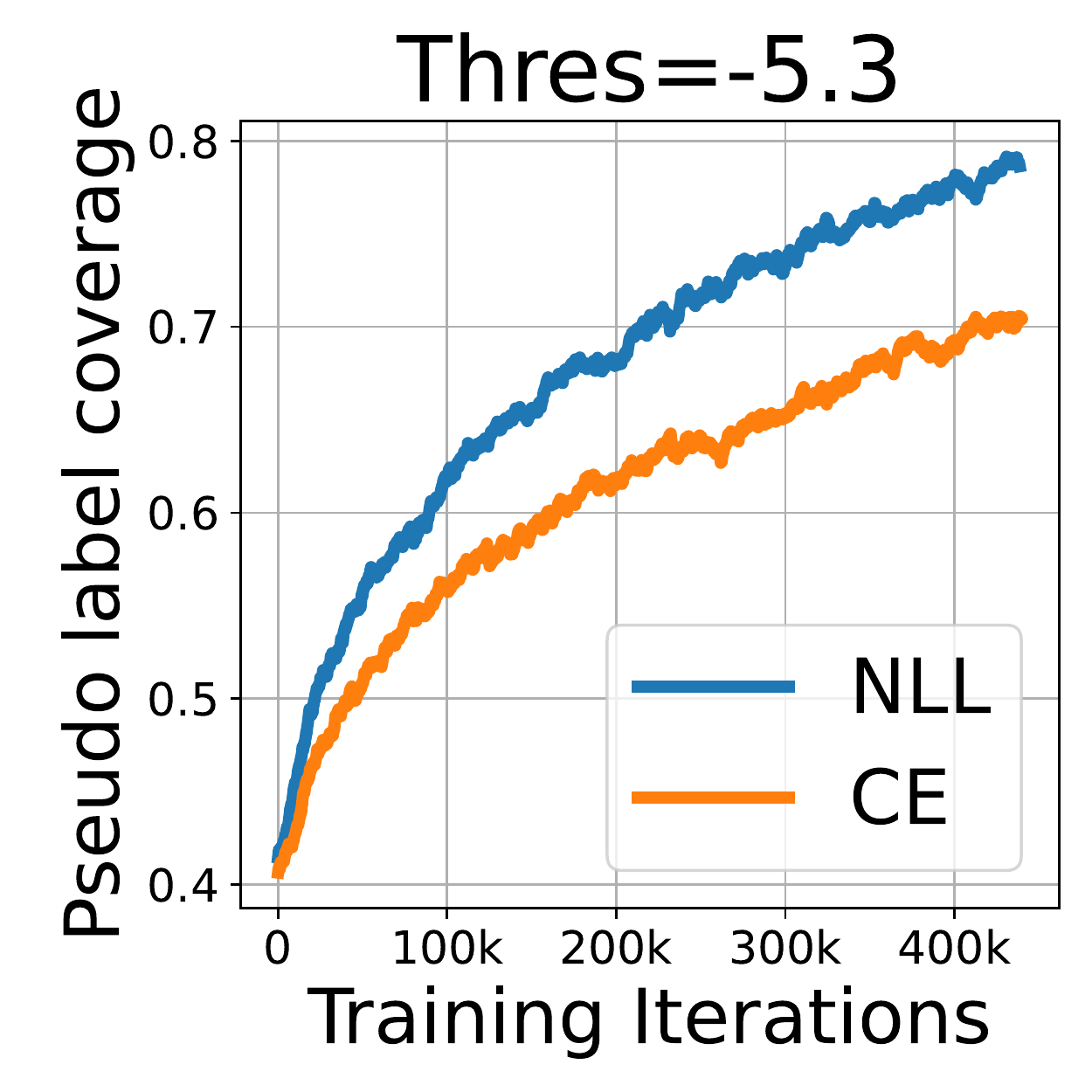} \hspace{-3mm}
			& \includegraphics[width=0.31\linewidth]{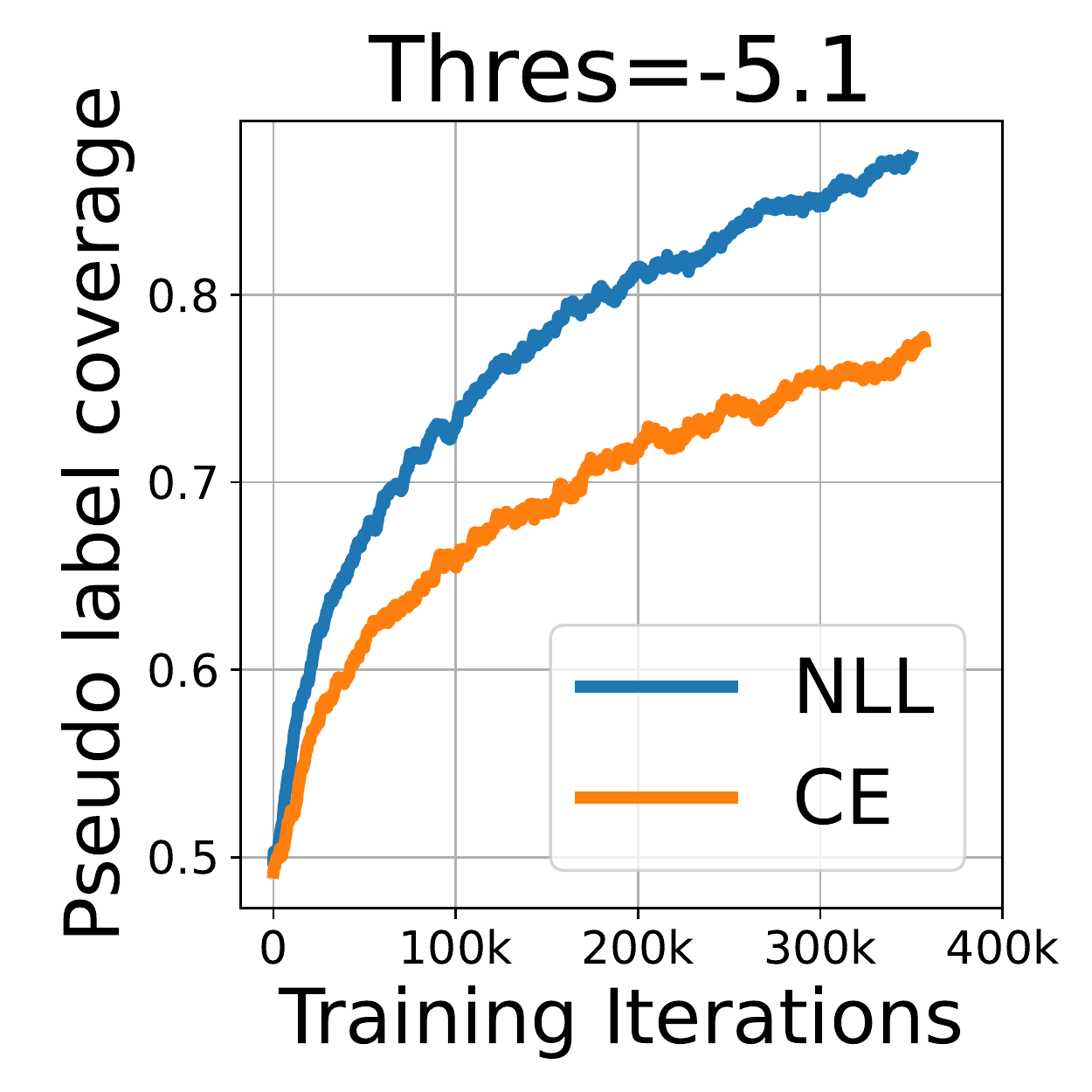} 
		\end{tabular}
	\end{center}
	\vspace{-7mm}
	\caption{\textbf{Comparison of the pseudo label coverage over training process with CE or NLL unsupervised losses and entropy thresholds.} The experiments are done on ModelNet10-SO(3) Sofa dataset with 5\% labeled data.}
	\label{fig:ce_nll_process}
\end{figure}

Here, we analyze the performance of our FisherMatch with different unsupervised losses, $L^{\text{CE}}$ (Eq. \ref{eq:ce_unsuper}) and $L^{\text{NLL}}$ (Eq. \ref{eq:nll_unsuper}), and how they are dependent on the entropy threshold $\tau$ by sweeping the parameter $\tau$.
Shown in Fig. \ref{fig:ce_nll} and \ref{fig:ce_nll_process}, the CE loss performs slightly better with a more tolerant threshold, while the NLL loss encourages a higher confidence of the network. The results verify that the NLL loss is a sharpened version of CE loss, where all the pseudo labels passing the threshold are seen as \textit{absolute confident} regardless of the actually predicted uncertainty. This behavior results in a more- but maybe over- confident network, especially with a tolerant entropy threshold. 
{On the other hand, since pseudo labels already exhibit much confidence as they pass the threshold, further sharpening does not lead to additional performance gains. }
Thus we believe CE loss is a better choice in our task with broader compatibility.

\vspace{-2mm}
\paragraph{Indication Ability of Distribution Entropy}

\begin{figure}[t]
    \centering
    \includegraphics[width=0.95\linewidth]{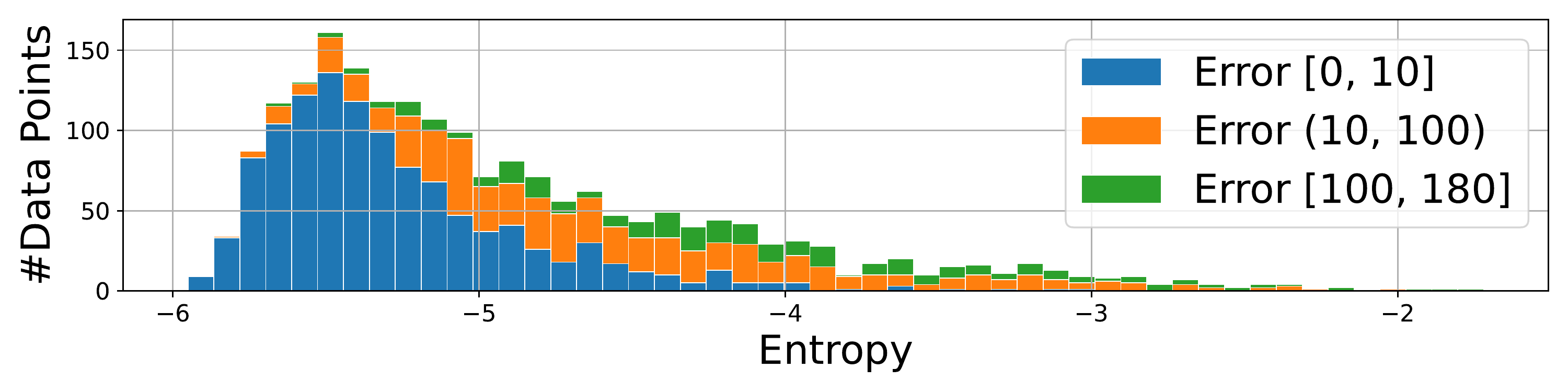}
    \vspace{-3mm}
    \caption{\textbf{Visualization of the indication ability of the distribution entropy wrt. the performance.} The horizontal axis is the distribution entropy and the vertical axis is the number of data points, color coded by the errors (in degrees). The experiments are done on ModelNet10-SO(3) Sofa dataset with 10\% labeled data.}
    \vspace{-2mm}
	\label{fig:indication}
\end{figure}

To clearly exhibit the indication ability of the distribution entropy wrt. the performance, we plot the relationship between the error of the prediction and the corresponding distribution entropy on test set in Fig. \ref{fig:indication}. The figure shows that the entropy effectively captures the prediction error, even under a low labeled data ratio.

\vspace{-2mm}
\paragraph{Comparison with Bingham-based Regressor}
\begin{table}[t]
\setlength{\tabcolsep}{16pt}
\footnotesize
  \centering
  \caption{\textbf{Semi-supervised learning experiment based on the Bingham distribution} on ModelNet10-SO(3) Sofa dataset with 10\% labeled data. }
  \vspace{-1mm}
    \begin{tabular}{lcc}
     \multicolumn{1}{c}{Method}     & \multicolumn{1}{l}{Mean$\downarrow$} & \multicolumn{1}{l}{Med.$\downarrow$} \\
    \midrule
    Sup.-Bingham &    39.61   &  12.68 \\
    Sup.-Fisher &     32.92  &   8.83 \\
    SSL-BinghamMatch &    27.01   &   6.77 \\ 
    SSL-FisherMatch &   21.29     &  5.25  \\
    \bottomrule
    \end{tabular}%
    \vspace{-2mm}
  \label{tab:bingham}%
\end{table}%

Our designed algorithm is agnostic to the choice of the rotation representation as well as the distribution model. We further test our framework based on the Bingham distribution and report the results in Table \ref{tab:bingham}.

As shown in the table, the Bingham-based framework is also able to utilize the unlabeled data and significantly improve the performance of rotation estimation. However, for both its supervised and semi-supervised version, its rotation errors are in general larger than those of matrix Fisher-based framework, since its rotation representation, quaternion, is not a continuous rotation representation, as pointed in \cite{zhou2019continuity}, thus leading to inferior performance. See Appendix Section \ref{sec:supp_settings} for detailed settings for SSL-BinghamMatch.

\section{Conclusion and {Limitations}}
\label{sec:conclusion}

In this paper, we tackle the problem of semi-supervised rotation regression from single RGB images in a general way. Without requiring any domain-specific knowledge or paired images, we leverage the teacher-student mutual learning framework and propose an entropy-based pseudo label filtering strategy based on the probabilistic modeling of $\SO$. Our experiments demonstrate the effectiveness and advantage of our method on both ModelNet10-SO(3) and Pascal3D+ datasets.

The performance of our method may degrade when both the numbers of labeled and unlabeled data are not sufficient. 
In this case, the uncertainty predicted by our network can be under-estimated due to over-fitting in the small labeled data, leading to reduced effectiveness in the pseudo label filtering and thus the mutual learning.

\section*{Acknowledgements}
We thank the anonymous reviewers for the insightful feedback.
We would like to credit Jiangran Lv from DUT for the fruitful discussions and valuable help in experiments 
and Yang Wang from PKU for the help in the derivation of maths. This work is supported in part by grants from the Joint NSFC-ISF Research Grant (62161146002).

{\small
\bibliographystyle{ieee_fullname}
\bibliography{egbib}
}

\appendix
\section{Implementation and Experiment Details}
\label{sec:supp_settings}

\subsection{Baselines in NVSM}
In the main paper, we compare our algorithm with NVSM\cite{wang2021neural} and their developed baselines, \textit{i.e.}, \textbf{StarMap}, \textbf{NeMo}, \textbf{Res50-Gene} and \textbf{Res50-Spec}. We briefly introduce these methods in this section, and more details can be found in \cite{wang2021neural}.

StarMap\cite{zhou2018starmap} and NeMo\cite{wang2021nemo} are two state-of-the-art supervised approaches for 3D pose estimation. For NeMo, the same single mesh cuboid is used as NVSM does. In addition, two baselines that formulate the object pose estimation problem as a classification task are adopted. To be specific, Res50-Gene formulates the pose estimation task for all categories as one single classification task, whereas Res50-Spec learns one classifier per category.

All baselines are evaluated using a semi-supervised protocol in a common pseudo labeling strategy. Specifically, all baselines are first trained on the annotated images and use the pretrained models to label the unlabeled data by pseudo labels. The final models are trained on both the annotated data and the pseudo-labeled data.

\subsection{Experiment Settings of BinghamMatch}
In Table 3 of the main paper, we experiment our algorithm based on the Bingham distribution $\mathcal{B}(\mathbf{M}, \mathbf{Z})$, namely BinghamMatch. We use the same experiment settings as FisherMatch, except that we choose unit quaternion as our rotation representation and use Bingham distribution for building the probabilistic rotation model. The rotation regressor outputs the parameters of the Bingham distribution. Specifically, following \cite{deng2020deep}, the regressor outputs a 7-d vector $(\mathbf{o}_1, \mathbf{o}_2)$ where the first 4-d vector $\mathbf{o}_1$ are first normalized and used to construct the parameter $\mathbf{M}$ via \textit{Birdal Strategy} 
\begin{equation*}
\mathbf{M}(\mathbf{o}_1) \triangleq\left[\begin{array}{rrrr}
o_{11} & -o_{12} & -o_{13} & o_{14} \\
o_{12} & o_{11} & o_{14} & o_{13} \\
o_{13} & -o_{14} & o_{11} & -o_{12} \\
o_{14} & o_{13} & -o_{12} & -o_{11}
\end{array}\right]
\end{equation*}
and the last 3-d vector $\mathbf{o}_2$ are applied by softplus activation and accumulation sum to construct the parameter $\mathbf{Z}$, with
\begin{equation*}
\begin{aligned}
&z_{1}=-\phi\left(o_{21}\right) \\
&z_{2}=-\phi\left(o_{21}\right)-\phi\left(o_{22}\right) \\
&z_{3}=-\phi\left(o_{21}\right)-\phi\left(o_{22}\right)-\phi\left(o_{23}\right)
\end{aligned}
\end{equation*}
where $\phi(\cdot)$ is the softplus activation.

\subsection{Implementation Details}
We run all the experiments with the unsupervised loss weight $\lambda_u$ as 1. In the pre-training stage, we train with the batch size of 32, and for the SSL stage, a training batch is composed of 32 labeled samples and 128 unlabeled samples. Both the weak and strong augmentations consist of random padding, cropping, resizing and color jittering (for real-world images) operations with different strengths.
On ModelNet10-SO(3) dataset, we use MobileNet-V2 \cite{howard2017mobilenets} architecture following \cite{levinson2020analysis, chen2021projective}. We use the Adam optimizer with the learning rate as 1e-4 without decaying. The entropy threshold $\tau$ is set as around -5.3. On Pascal3D+ dataset, we follow NVSM \cite{wang2021neural} to use ResNet \cite{he2016deep} architecture pretrained on ImageNet \cite{deng2009imagenet} dataset. We use the Adam optimizer with the learning rate as 1e-4 in pre-training stage and 1e-5 in the Semi-supervised training stage, without decaying. 
Due to the extremely small amount of data, we find a large variation among experiments of different categories and \#labeled images on Pascal3D+ dataset, thus choose different confidence thresholds in the SSL stage.

\section{Review of Bingham Distribution and Matrix Fisher Distribution}
\label{sec:supp_math}

\subsection{Unit Quaternion and Rotation Matrix}
\label{sec:supp_basis}
Unit quaternion and rotation matrix are two commonly used representations for rotation elements from $\SO$. 
Unit quaternion $\mathbf{q} \in \mathcal{S}^3$ is a double-covered representation of $\SO$, where $\mathbf{q}$ and $-\mathbf{q}$ represent the same rotation.
Rotation $\mathbf{R} \in \mathbb{R}^{3\times 3}$ satisfies $\mathbf{R}^T\mathbf{R}=\mathbf{I}$ and $\text{det}(\mathbf{R}) = +1$.
For a quaternion $\mathbf{q} = [w,x,y,z]$, we use the standard transform function $\gamma$ to compute its corresponding rotation matrix:

{
\small
\begin{equation*}
    \gamma(\mathbf{q}) =
    \left[
    \begin{array}{ccc}
    1-2y^{2}-2 z^{2} & 2 x y-2 w z & 2 x z+2 w y \\ 
    2 x y+2 w z & 1-2 x^{2}-2 z^{2} & 2 y z-2 w x \\ 
    2 x z-2 w y & 2 y z+2 w x & 1-2 x^{2}-2 y^{2}
    \end{array}
    \right]
\end{equation*}
}
The inverse transform $\gamma^{-1}$ is \\
{
\small
\begin{equation*}
    \gamma^{-1}(\mathbf{R}) =
    \left[
    \begin{array}{c}
    \sqrt{1+\mathbf{R}_{00}+\mathbf{R}_{11}+\mathbf{R}_{22}}/2\\ 
    (\mathbf{R}_{21}-\mathbf{R}_{12})/2\sqrt{1+\mathbf{R}_{00}+\mathbf{R}_{11}+\mathbf{R}_{22}}\\ 
    (\mathbf{R}_{02}-\mathbf{R}_{20})/2\sqrt{1+\mathbf{R}_{00}+\mathbf{R}_{11}+\mathbf{R}_{22}}\\
    (\mathbf{R}_{10}-\mathbf{R}_{01})/2\sqrt{1+\mathbf{R}_{00}+\mathbf{R}_{11}+\mathbf{R}_{22}}
    \end{array}
    \right]
\end{equation*}
}
Note that we here only cover one hemisphere of $\mathcal{S}^3$.

\subsection{Bingham Distribution}

\textit{Bingham} distribution \cite{bingham1974antipodally, glover2014quaternion} is an antipodally symmetric distribution. Its probability density function $\mathcal{B}: \mathcal{S}^{d-1} \rightarrow \mathcal{R}$ is defined as

\begin{equation}
\label{eq:supp_bingham}
p_B(\mathbf{q})=
\mathcal{B}(\mathbf{q} ; \mathbf{M}, \mathbf{Z}) =\frac{1}{F(\mathbf{Z})} \exp \left(\mathbf{q}^{T} \mathbf{M} \mathbf{Z} \mathbf{M}^{T} \mathbf{q}\right)
\end{equation}
where $\mathbf{M} \in \text{O}(4)$ is a $4 \times 4$ orthogonal matrix and $\mathbf{Z} = \textrm{diag}(0, z_1, z_2, z_3)$ is a $4 \times 4$ diagonal matrix with $0\ge z_1 \ge z_2 \ge z_3$. The first column of parameter $\mathbf{M}$ indicates the mode and the remaining columns describe the orientation of dispersion while the corresponding $z_i, (i\in {1,2,3})$ describe the strength of the dispersion. $F(\mathbf{Z})$ is the normalizing constant.

\begin{proposition}
Given $f \sim \mathcal{B}(\mathbf{M}, \mathbf{Z})$, the \textbf{entropy} of Bingham distribution is computed as
\begin{equation}
H_B(f)=\log F - \mathbf{Z} \frac{\nabla F}{F}.
\end{equation}
\end{proposition}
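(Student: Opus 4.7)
The plan is to compute the entropy directly from its definition $H_B(f) = -\mathbb{E}_{\mathbf{q}\sim f}[\log p_B(\mathbf{q})]$ and then recognize the expectation as a log-partition derivative, which is the standard trick for exponential families. Taking the logarithm of the density in Equation \ref{eq:supp_bingham} gives $\log p_B(\mathbf{q}) = -\log F(\mathbf{Z}) + \mathbf{q}^T \mathbf{M}\mathbf{Z}\mathbf{M}^T \mathbf{q}$, so
\begin{equation*}
H_B(f) \;=\; \log F(\mathbf{Z}) \;-\; \mathbb{E}_{\mathbf{q}\sim f}\!\left[\mathbf{q}^T \mathbf{M}\mathbf{Z}\mathbf{M}^T \mathbf{q}\right].
\end{equation*}
The $\log F$ term is already the first piece of the target expression, so the only task remaining is to rewrite the quadratic expectation as $\sum_i z_i (\partial_{z_i} F)/F$, which is the coordinate form of $\mathbf{Z}\nabla F/F$.

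Next I would exploit the orthogonality of $\mathbf{M}$: the substitution $\mathbf{p}=\mathbf{M}^T\mathbf{q}$ is a measure-preserving bijection on $\mathcal{S}^3$, so
\begin{equation*}
F(\mathbf{Z}) \;=\; \int_{\mathcal{S}^3} \exp\!\Big(\sum_{i=1}^{4} z_i\, p_i^2\Big)\, d\mathbf{p}.
\end{equation*}
Differentiation under the integral sign (justified by smooth, bounded dependence of the integrand on each $z_i$ on the compact domain $\mathcal{S}^3$) then yields $\partial F/\partial z_i = \int p_i^2 \exp(\mathbf{p}^T\mathbf{Z}\mathbf{p})\, d\mathbf{p}$, hence $(\partial_{z_i}F)/F = \mathbb{E}[p_i^2]$ where the expectation is with respect to the pushforward density.

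Multiplying by $z_i$ and summing gives $\sum_i z_i (\partial_{z_i}F)/F = \mathbb{E}[\mathbf{p}^T\mathbf{Z}\mathbf{p}] = \mathbb{E}[\mathbf{q}^T\mathbf{M}\mathbf{Z}\mathbf{M}^T\mathbf{q}]$, which is exactly the expectation appearing in the entropy formula. Substituting yields
\begin{equation*}
H_B(f) \;=\; \log F \;-\; \sum_{i=1}^{4} z_i \frac{\partial_{z_i} F}{F} \;=\; \log F \;-\; \mathbf{Z}\frac{\nabla F}{F},
\end{equation*}
reading $\mathbf{Z}$ as the vector of its diagonal entries (with $z_0 = 0$, so that coordinate contributes nothing). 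I do not expect a genuine obstacle here: the computation is a textbook application of the exponential-family identity relating sufficient-statistic expectations to derivatives of the log-partition function, and the only subtlety is the change of variables on $\mathcal{S}^3$, which is immediate from $\mathbf{M}\in\mathrm{O}(4)$ preserving the surface measure.
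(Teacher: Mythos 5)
Your proposal is correct and follows essentially the same route as the paper's proof: split $-\mathbb{E}[\log p_B]$ into $\log F$ minus the expectation of the quadratic form, then identify that expectation with $\sum_i z_i\,\partial_{z_i}F/F$ via the exponential-family log-partition identity (the paper does this by expanding $\mathbf{q}^T\mathbf{M}\mathbf{Z}\mathbf{M}^T\mathbf{q}=\sum_i z_i(\mathbf{v}_i^T\mathbf{q})^2$ and differentiating $F$, where you instead make the measure-preserving substitution $\mathbf{p}=\mathbf{M}^T\mathbf{q}$ first — a cosmetic difference). Your explicit justification of differentiation under the integral sign and the change of variables is a slight tightening of the paper's argument, but the underlying computation is identical.
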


\begin{proof}
Denote $C=\mathbf{M} \mathbf{Z} \mathbf{M}^{T}$
\begin{equation*}
\begin{aligned}
H_B(f) &=-\oint_{\mathbf{q} \in \mathcal{S}^{3}} f(\mathbf{q}) \log f(\mathbf{q})\mathrm{d}\mathbf{q} \\
&=-\oint_{\mathbf{q} \in \mathcal{S}^{3}} \frac{1}{F} \exp\left({\mathbf{q}^{T} C \mathbf{q}}\right)\left(\mathbf{q}^{T} C \mathbf{q}-\log F\right)\mathrm{d}\mathbf{q} \\
&=\log F-\frac{1}{F} \oint_{\mathbf{q} \in \mathcal{S}^{3}} \mathbf{q}^{T} C \mathbf{q} \exp{\left(\mathbf{q}^{T} C \mathbf{q}\right)}.
\end{aligned}
\end{equation*}
Writing $f$ in standard form, and denoting the hyperspherical integral by $g(\mathbf{Z})$,
\begin{equation*}
g(\mathbf{Z}) = \oint_{\mathbf{q} \in \mathcal{S}^{3}} \mathbf{q}^{T} C \mathbf{q} \exp{\left(\mathbf{q}^{T} C \mathbf{q}\right)}\mathrm{d}\mathbf{q},
\end{equation*}
Then 
\begin{equation*}
\begin{aligned}
    g(\mathbf{Z}) &=\oint_{\mathbf{q} \in \mathcal{S}^{3}} \sum_{i=1}^{4} z_{i}\left(\mathbf{v}_{i}^{T} \mathbf{q}\right)^{2} \exp{\left( \sum_{j=1}^{4} z_{j}\left(\mathbf{v}_{j}{ }^{T} \mathbf{q}\right)^{2}\right)}\mathrm{d}\mathbf{q} \\
    &=\sum_{i=1}^{4} z_{i} \frac{\partial F}{\partial z_{i}}=\mathbf{Z} \cdot \nabla \mathbf{F}.
\end{aligned}
\end{equation*}
Thus, the entropy is $\log F - \mathbf{Z} \frac{\nabla F}{F}$
\end{proof}

\begin{proposition}
Given $f \sim \mathcal{B}(\mathbf{M}_f, \mathbf{Z}_f)$ and $g \sim \mathcal{B}(\mathbf{M}_g, \mathbf{Z}_g)$, the \textbf{cross entropy} between Bingham distributions ($f$ to $g$) is computed as
\begin{equation}
H_B(f, g)=\log F_{g}-\sum_{i=1}^{4} z_{g i}\left(b_{i}^{2}+\sum_{j=1}^{4}\left(a_{i j}^{2}-b_{i}^{2}\right) \frac{1}{F_{f}} \frac{\partial F_{f}}{\partial z_{f j}}\right).
\end{equation}
where $a_{ij}$ is the entries of $\mathbf{\hat{A}} = \mathbf{M}_{f}^{T} \mathbf{M}_{g}$ and $b_{i}$ is the entries of  $\mathbf{b}=\boldsymbol{\mu}_{\mathbf{f}}^{T} \mathbf{M}_{g}$ ($\boldsymbol{\mu}_{f}$ is the mode of distribution $f$).
\end{proposition}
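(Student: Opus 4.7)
The plan is to start from the definition $H_B(f, g) = -\int_{\mathcal{S}^3} f(\mathbf{q}) \log g(\mathbf{q})\,\mathrm{d}\mathbf{q}$ and substitute the Bingham density for $g$. Since $\log g(\mathbf{q}) = -\log F_g + \mathbf{q}^T \mathbf{M}_g \mathbf{Z}_g \mathbf{M}_g^T \mathbf{q}$ and $f$ integrates to one on $\mathcal{S}^3$, the cross entropy immediately reduces to $\log F_g - \mathbb{E}_f[\mathbf{q}^T \mathbf{M}_g \mathbf{Z}_g \mathbf{M}_g^T \mathbf{q}]$. Expanding the quadratic form in the eigenbasis of $g$ gives $\mathbb{E}_f[\mathbf{q}^T \mathbf{M}_g \mathbf{Z}_g \mathbf{M}_g^T \mathbf{q}] = \sum_{i=1}^{4} z_{gi}\,\mathbb{E}_f[(\mathbf{m}_{g,i}^T \mathbf{q})^2]$, where $\mathbf{m}_{g,i}$ is the $i$-th column of $\mathbf{M}_g$. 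The whole problem therefore reduces to computing the four quadratic expectations $\mathbb{E}_f[(\mathbf{m}_{g,i}^T \mathbf{q})^2]$ with respect to the Bingham $f$.

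Next I would change variables into the eigenframe of $f$. Writing $\mathbf{m}_{g,i} = \sum_{j=1}^{4} a_{ji}\,\mathbf{m}_{f,j}$ with $a_{ji} = \mathbf{m}_{f,j}^T \mathbf{m}_{g,i}$ (the entries of $\mathbf{M}_f^T \mathbf{M}_g$, up to the transpose convention used in the proposition), one gets $(\mathbf{m}_{g,i}^T \mathbf{q})^2 = \sum_{j,k} a_{ji} a_{ki} (\mathbf{m}_{f,j}^T \mathbf{q})(\mathbf{m}_{f,k}^T \mathbf{q})$. The pivotal observation is that in the rotated coordinates $q'_j = \mathbf{m}_{f,j}^T \mathbf{q}$ the Bingham density depends only on the squares $(q'_j)^2$, so the pushforward of $f$ is invariant under every sign flip $q'_j \mapsto -q'_j$, and the hyperspherical measure is preserved as well. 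Hence $\mathbb{E}_f[q'_j q'_k] = 0$ for $j \neq k$, collapsing the double sum to $\mathbb{E}_f[(\mathbf{m}_{g,i}^T \mathbf{q})^2] = \sum_{j=1}^{4} a_{ji}^2\,\mathbb{E}_f[(q'_j)^2]$. The diagonal expectations are then handled by the standard partition-function trick: treating $F_f$ as a function of all four entries $z_{fj}$ (including the one pinned to zero in the Bingham parameterization), differentiation under the integral sign gives $\mathbb{E}_f[(q'_j)^2] = \tfrac{1}{F_f}\tfrac{\partial F_f}{\partial z_{fj}}$.

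Finally I would recast the result in the stated form. Because $\mathbf{q}$ is a unit vector, $\sum_{j}(q'_j)^2 = 1$, which upon taking expectations yields the identity $\sum_{j=1}^{4} \tfrac{1}{F_f}\tfrac{\partial F_f}{\partial z_{fj}} = 1$. Setting $b_i = \boldsymbol{\mu}_f^T \mathbf{m}_{g,i} = a_{1i}$ (since the mode $\boldsymbol{\mu}_f$ is the first column of $\mathbf{M}_f$), I add and subtract $b_i^2$ times that identity to rewrite $\sum_j a_{ji}^2 p_j = b_i^2 + \sum_j (a_{ji}^2 - b_i^2)\,p_j$, where $p_j = \tfrac{1}{F_f}\tfrac{\partial F_f}{\partial z_{fj}}$; the $j=1$ summand vanishes automatically since $a_{1i}=b_i$, which is presumably why the authors chose this seemingly redundant form. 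Substituting back and combining with the leading $\log F_g$ term produces the stated identity.

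The main obstacle is the cross-term vanishing: although it is intuitive from the antipodal symmetry, one has to make explicit the full $\mathbb{Z}_2^4$ sign-flip symmetry of the integrand in the rotated coordinates, verify that the surface measure on $\mathcal{S}^3$ is invariant under each reflection, and thereby conclude that any monomial odd in some $q'_j$ has zero expectation. Everything else—the partition-function derivative and the final algebraic rewrite—is routine once this orthogonality is in place.
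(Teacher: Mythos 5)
Your proposal is correct and follows essentially the same route as the paper's proof: expand $\log g$ to reduce the cross entropy to $\log F_g-\sum_i z_{gi}\,E_f[(\mathbf{m}_{g,i}^T\mathbf{q})^2]$, decompose each $\mathbf{m}_{g,i}$ in the orthonormal frame of $f$ (mode plus dispersion directions), kill the cross terms by the antipodal/sign-flip symmetry (the paper's ``odd projected moments are zero''), evaluate the diagonal second moments as $\frac{1}{F_f}\frac{\partial F_f}{\partial z_{fj}}$, and use the unit-norm identity to reach the stated $b_i^2+\sum_j(a_{ij}^2-b_i^2)\frac{1}{F_f}\frac{\partial F_f}{\partial z_{fj}}$ form. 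The only cosmetic difference is that you keep the mode as the first column of a full $4\times4$ frame and invoke $\sum_j E_f[(q_j')^2]=1$ at the end, whereas the paper separates $\boldsymbol{\mu}_f$ from $\mathbf{M}_f$ and substitutes $E_f[(\boldsymbol{\mu}_f^T\mathbf{q})^2]=1-\sum_j E_f[(\mathbf{v}_{fj}^T\mathbf{q})^2]$ directly.
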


\begin{proof}
\begin{equation*}
\begin{aligned}
H_B(f, g) &=-\oint_{\mathbf{q} \in \mathcal{S}^{3}} f(\mathbf{q}) \log g(\mathbf{q})\mathrm{d}\mathbf{q} \\
&=-\oint_{\mathbf{q} \in \mathcal{S}^{3}} f(\mathbf{q}) \left(\sum_{i=1}^{4} z_{g i}\left(\mathbf{v}_{\mathrm{gi}}^{T} \mathbf{q}\right)-\log F_{g}\right)\mathrm{d}\mathbf{q} \\
&=\log F_{g}-\sum_{i=1}^{4} z_{g i} E_{f}\left[\left(\mathbf{v}_{\mathrm{gi}}^{T} \mathbf{q}\right)\right].
\end{aligned}
\end{equation*}
Since $\left[\begin{array}{c}\mathbf{A}\\ \mathbf{b}^{T}\end{array}\right]=\left[\begin{array}{c}\mathbf{M}_{f}^{T} \\ \boldsymbol{\mu}_{f}^{T}\end{array}\right] \mathbf{M}_{g}$ 
and $\left[\begin{array}{c}\mathbf{M}_{f}^{T} \\ \boldsymbol{\mu}_{f}^{T}\end{array}\right]$ is orthogonal, $\mathbf{M}_{g}=\left[\mathbf{M}_{f} \boldsymbol{\mu}_{f}\right]\left[\begin{array}{c}\mathbf{A} \\ \mathbf{b}^{T}\end{array}\right]$, so $\mathbf{v}_{g i}= \mathbf{M}_{f} \mathbf{a}_{i}+b_{i} \boldsymbol{\mu}_{f}$. Thus,

\begin{equation*}
\begin{aligned}
E_{f}\left[\mathbf{v}_{gi}^{T} \mathbf{q}\right] &=E_{f}\left[\left(\left(\mathbf{M}_{f} \mathbf{a}_{i}+b_{i} \boldsymbol{\mu}_{f}\right)^{T} \mathbf{q}\right)^{2}\right] \\
&=b_{i}^{2} E_{f}\left[\left(\boldsymbol{\mu}_{f}^{T} \mathbf{q}\right)^{2}\right]+\sum_{j=1}^{4} a_{i j}^{2} E_{f}\left[\left(\mathbf{v}_{fj}^{T} \mathbf{q}\right)^{2}\right]
\end{aligned}
\end{equation*}
by linearity of expectation, and since all the odd projected moments are zero. Since
\begin{equation*}
    E_{f}\left[\left(\boldsymbol{\mu}_{f}^{T} \mathbf{q}\right)^{2}\right]=1-\sum_{j=1}^{4} E_{f}\left[\left(\mathbf{v}_{fj}^{T} \mathbf{q}\right)^{2}\right]
\end{equation*}
and
\begin{equation*}
    E_{f}\left[\left(\mathbf{v}_{fj}^{T} \mathbf{q}\right)^{2}\right]=\frac{1}{F_{f}} \frac{\partial F_{f}}{\partial z_{f j}},
\end{equation*}
then 
\begin{equation*}
H(f, g)=\log F_{g}-\sum_{i=1}^{4} z_{g i}\left(b_{i}^{2}+\sum_{j=1}^{4}\left(a_{i j}^{2}-b_{i}^{2}\right) \frac{1}{F_{f}} \frac{\partial F_{f}}{\partial z_{f j}}\right).
\end{equation*}
\end{proof}

\subsection{Matrix Fisher Distribution}

\textit{Matrix Fisher} distribution \cite{prentice1986orientation,khatri1977mises} $\mathcal{MF}(\mathbf{R};\mathbf{A})$ is a probability distribution over SO(3) for rotation matrices, whose probability density function is in the form of 
\begin{equation}
\label{eq:supp_fisher}
p_F(\mathbf{R})=\mathcal{{MF}}(\mathbf{R} ; \mathbf{A})=\frac{1}{F(\mathbf{A})} \exp \left(\operatorname{tr}\left(\mathbf{A}^{T} \mathbf{R}\right)\right)
\end{equation}
where parameter $\mathbf{A} \in \mathbb{R}^{3\times 3}$ is an arbitrary $3\times 3$ matrix and $F(\mathbf{A})$ is the normalizing constant. The mode and dispersion of the distribution can be computed from the singular value decomposition of the parameter $\mathbf{A}$. Assume $\mathbf{A} = \mathbf{USV}^T$ and the singular values are sorted in descending order, the mode of the distribution is computed as 
\begin{equation*}
\mathbf{\hat{R}}=\mathbf{U}\left[\begin{array}{ccc}
1 & 0 & 0 \\
0 & 1 & 0 \\
0 & 0 & \operatorname{det}(\mathbf{U} \mathbf{V})
\end{array}\right] \mathbf{V}^{T}
\end{equation*}
and the singular values $\mathbf{S} = \text{diag}(s_1, s_2, s_3)$ indicates the strength of concentration. The larger a singular value $s_i$ is, the more concentrated the distribution is along the corresponding axis (the $i$-th column of mode $\mathbf{\hat{R}}$).

\paragraph{Entropy and Cross Entropy} 
Given $f\sim\mathcal{MF}(\mathbf{A}_f)$ and $g\sim\mathcal{MF}(\mathbf{A}_g)$, we can start with the definition,
\begin{equation*}
    H_F(f) = -\int_{\mathbf{R} \in \SO} f(\mathbf{R}) \log f(\mathbf{R})\mathrm{d}\mathbf{R}
\end{equation*}
and
\begin{equation*}
    H_F(f, g) = -\int_{\mathbf{R} \in \SO} f(\mathbf{R}) \log g(\mathbf{R})\mathrm{d}\mathbf{R}.
\end{equation*}
However, note the equivalence of matrix Fisher distribution and Bingham distribution (see Section \ref{sec:supp_equ}), and doing integrals over $\mathbb{S}^3$ (with 4 dimensions and 1 constraint) is easier than that over SO(3) (with 9 dimensions and 6 constraints), we first convert a matrix Fisher distribution to its equivalent Bingham distribution, and compute the properties via the formula of Bingham distribution.

Let $p_F$ be the pdf of a matrix Fisher distribution, and $p_B$ be the pdf of its equivalent Bingham distribution. Based on Eq. \ref{eq:equ_d_2pi2} and \ref{eq:equ_p_2pi2} in Section \ref{sec:supp_equ}, we have
\begin{equation}
    \begin{aligned}
        H_F(p_F) &= -\int_{\mathbf{R} \in \SO} p_F \log p_F \mathrm{d}\mathbf{R} \\
        &= -\oint_{\mathbf{q} \in \mathbb{S}^{3}} 2\pi^2 p_B \left(\log(2\pi^2) + \log (p_B)\right) \frac{1}{2\pi^2}\mathrm{d}\mathbf{q} \\
        &= -\log (2\pi^2)\oint_{\mathbf{q} \in \mathbb{S}^{3}} p_B \mathrm{d}\mathbf{q}  - \oint_{\mathbf{q} \in \mathbb{S}^{3}} p_B\log \mathrm{d}\mathbf{q} \\
        &= H_B(p_B) - \log (2\pi^2).
    \end{aligned}
\end{equation}
And similarly,
\begin{equation}
    H_F(f,g) = H_B(f,g) - \log (2\pi^2).
\end{equation}

\subsection{Equivalence of Bingham Distribution and Matrix Fisher Distribution}
\label{sec:supp_equ}

\begin{figure*}[ht]
\begin{equation}
\label{eq:supp_proper}
\mathbf{A}=\mathbf{U}_{1} \mathbf{S}^{\prime} \mathbf{V}_{1}^{T}=\underbrace{\mathbf{U}_{1}\left[\begin{array}{ccc}
1 & 0 & 0 \\
0 & 1 & 0 \\
0 & 0 & \operatorname{det}\left(\mathbf{U}_{1}\right)
\end{array}\right]}_{\mathbf{U}} \underbrace{\left[\begin{array}{ccc}
s_{1}^{\prime} & 0 & 0 \\
0 & s_{2}^{\prime} & 0 \\
0 & 0 & \operatorname{det}\left(\mathbf{U}_{1} \mathbf{V}_{1}\right) s_{3}^{\prime}
\end{array}\right]}_{\mathbf{S}} \underbrace{\left[\begin{array}{ccc}
1 & 0 & 0 \\
0 & 1 & 0 \\
0 & 0 & \operatorname{det}\left(\mathbf{V}_{1}\right)
\end{array}\right]}_{\mathbf{V}^{T}} \mathbf{V}_{1}^{T}=\mathbf{U} \mathbf{S} \mathbf{V}^{T}
\end{equation}
\end{figure*}

As discussed in \cite{prentice1986orientation}, for a random rotation matrix variable $\mathbf{R}$, it follows a matrix Fisher distribution if and only if its corresponding unit quaternion $\mathbf{q} =\gamma^{-1}(\mathbf{R})$ ($\gamma$ is defined in Section \ref{sec:supp_basis}) follows a Bingham distribution, i.e., the matrix Fisher distribution is a reparameterization of the Bingham distribution. 

In this section, we derive the fact of the equivalence of Bingham distribution and matrix Fisher distribution and clarify the relationships between the various parameters.

In measure theory, the \textit{Lebesgue measure} \cite{enwiki:lebesgue} assigns a measure to subsets of n-dimensional Euclidean space, and the \textit{Haar measure} \cite{enwiki:haar} assigns an ``invariant volume'' to subsets of locally compact topological groups, in our case, the Lie group $\SO$. We define $\mathrm{d}\mathbf{q}$ based on Lebesgue measure and $\mathrm{d}\mathbf{R}$ based on Haar measure.

\begin{proposition}
\label{prop:constant_d}
The scaling factor from unit quaternions to rotation matrices is constant, and satisfies
\begin{equation}
    \label{eq:equ_d_2pi2}
    \mathrm{d}\mathbf{R} = \frac{1}{2\pi^2} \mathrm{d}\mathbf{q}
\end{equation}
\end{proposition}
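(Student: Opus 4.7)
The plan is to prove constancy of the scaling factor via an invariance/uniqueness argument, and then pin down its value by matching total volumes.

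First, I would record the geometric setup. The map $\gamma:\mathcal{S}^3\to\SO$ is the standard double cover: it is a smooth, surjective Lie group homomorphism from unit quaternions (identified with $SU(2)\cong\mathcal{S}^3$) to $\SO$, with kernel $\{\pm\mathbf{I}\}$. In particular, $\gamma$ is a local diffeomorphism at every point, so it has a well-defined local Jacobian, and the statement $d\mathbf{R}=\frac{1}{2\pi^2}d\mathbf{q}$ is to be read as the pointwise identification of these infinitesimal volumes (equivalently, as the pullback relation $\gamma^* d\mathbf{R}=\frac{1}{2\pi^2}d\mathbf{q}$).

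Next, I would prove constancy via group invariance. Let $\mathbf{p}\in\mathcal{S}^3$ act on $\mathcal{S}^3$ by left quaternion multiplication; this is a restriction of a rotation in $SO(4)$, so $d\mathbf{q}$ is invariant under it. Under $\gamma$, this action is intertwined with left multiplication $\mathbf{R}\mapsto\gamma(\mathbf{p})\mathbf{R}$ on $\SO$, which preserves the Haar measure $d\mathbf{R}$. Hence the pullback $\gamma^* d\mathbf{R}$ is a smooth measure on $\mathcal{S}^3$ invariant under the transitive action of $\mathcal{S}^3$ on itself. By uniqueness of invariant measures on a homogeneous space, $\gamma^* d\mathbf{R}=c\,d\mathbf{q}$ for some constant $c>0$. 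This gives the first assertion of the proposition.

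Finally, I would pin down $c$ by matching total volumes. Under the standard convention that Haar measure on $\SO$ is normalized so that $\int_{\SO}d\mathbf{R}=1$, and using the classical formula $\int_{\mathcal{S}^3}d\mathbf{q}=2\pi^2$ for the unit 3-sphere, the change-of-variables identity $\int_{\SO}f(\mathbf{R})\,d\mathbf{R}=\int_{\mathcal{S}^3}f(\gamma(\mathbf{q}))\,c\,d\mathbf{q}$ applied to $f\equiv 1$ gives $1=2\pi^2 c$, hence $c=\frac{1}{2\pi^2}$.

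The one point that takes care is the bookkeeping around the 2-to-1 cover: since integration over all of $\mathcal{S}^3$ traverses each preimage of $\mathbf{R}\in\SO$ twice, one has to verify that this double counting is exactly compensated by the value $c=\frac{1}{2\pi^2}$ and the antipodal symmetry of both $d\mathbf{q}$ and the integrand $f\circ\gamma$. I would check this by computing the two sides for $f\equiv 1$ over a single hemisphere, where $\gamma$ is a genuine diffeomorphism onto $\SO$, obtaining $\int_{\text{hemisphere}}c\,d\mathbf{q}=\pi^2 c=\tfrac12$, consistent with both preimages contributing equally to the total Haar mass $1$. Once this compatibility is confirmed, the proposition follows; no further analytical work is required, since uniqueness of invariant measures supplies constancy and the volume computation supplies the constant.
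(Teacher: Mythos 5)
Your overall strategy is the same as the paper's: constancy of the scaling factor via invariance of the two measures under the (intertwined) group actions, then pinning the constant by matching total masses $\int_{\SO}\mathrm{d}\mathbf{R}=1$ and $\oint_{\mathcal{S}^3}\mathrm{d}\mathbf{q}=2\pi^2$. The invariance/uniqueness argument you give is in fact a cleaner version of the paper's constancy step. However, there is a genuine gap in your handling of the factor of $2$ coming from the double cover, and it is precisely the delicate point of this proposition. You declare at the outset that the statement is to be read as the pointwise pullback relation $\gamma^*\mathrm{d}\mathbf{R}=c\,\mathrm{d}\mathbf{q}$, and your invariance argument determines $c$ under that reading. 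But then the change-of-variables identity you use to fix $c$, namely $\int_{\SO}f(\mathbf{R})\,\mathrm{d}\mathbf{R}=\int_{\mathcal{S}^3}f(\gamma(\mathbf{q}))\,c\,\mathrm{d}\mathbf{q}$, is not the pullback identity: since $\gamma$ is $2$-to-$1$, the correct identity for the pullback constant is $2\int_{\SO}f\,\mathrm{d}\mathbf{R}=\int_{\mathcal{S}^3}(f\circ\gamma)\,c\,\mathrm{d}\mathbf{q}$. Under your own interpretation the constant is therefore $c=1/\pi^2$, not $1/(2\pi^2)$. Your final ``consistency check'' actually exposes this: a hemisphere is a fundamental domain, so if $\gamma^*\mathrm{d}\mathbf{R}=c\,\mathrm{d}\mathbf{q}$ then $\int_{\text{hemisphere}}c\,\mathrm{d}\mathbf{q}$ must equal the full Haar mass $1$, whereas with $c=1/(2\pi^2)$ you obtain $\pi^2c=\tfrac12$. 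Asserting that this is ``consistent with both preimages contributing equally'' conflates the pullback (local Jacobian on one sheet) with the pushforward (global transport counting both sheets); the two differ exactly by the factor of $2$ at issue.

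The statement the paper intends, and the only one it subsequently uses (e.g.\ in $H_F=H_B-\log(2\pi^2)$ and $p_F=2\pi^2p_B$), is the pushforward/normalization statement: normalized Haar measure on $\SO$ is the image under $\gamma$ of the normalized uniform measure $\tfrac{1}{2\pi^2}\mathrm{d}\mathbf{q}$ on $\mathcal{S}^3$, equivalently $\int_{\SO}f\,\mathrm{d}\mathbf{R}=\tfrac{1}{2\pi^2}\oint_{\mathcal{S}^3}f(\gamma(\mathbf{q}))\,\mathrm{d}\mathbf{q}$ for every integrable $f$, where the antipodal symmetry of $f\circ\gamma$ and the two preimages supply the extra factor of $2$ relative to the sheet-wise relation $\mathrm{d}\mathbf{R}=\tfrac{1}{\pi^2}\mathrm{d}\mathbf{q}$. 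To repair your argument, either drop the pointwise-pullback reading and take the displayed integral identity (with invariance giving constancy of the density of the pushforward) as the content of the proposition, so that $f\equiv1$ gives $c=1/(2\pi^2)$ as in the paper; or keep the pullback reading, obtain $c=1/\pi^2$, and then derive the $1/(2\pi^2)$ conversion for whole-sphere integrals by splitting $\mathcal{S}^3$ into two hemispheres and using antipodal symmetry. As written, the proposal proves neither version cleanly, because the interpretation fixed in the first paragraph contradicts the value computed in the third.
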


\begin{proof}
Define $S$ as the Lebesgue measure on $\mathcal{S}^3$ and $T$ as the Haar measure on $\SO$. Generally we can write 
\begin{equation*}
    T(\mathrm{d}\mathbf{R}) = \alpha(\mathbf{q})S(\mathrm{d}\mathbf{q})
\end{equation*}
where $\alpha(\mathbf{q})$ is the scaling factor from unit quaternions to rotation matrices, or specifically,
\begin{equation}
\label{eq:alpha_q}
\begin{aligned}
    T(\mathrm{d}\mathbf{R_1}) &= \alpha(\mathbf{q_1})S(\mathrm{d}\mathbf{q_1})\\
    T(\mathrm{d}\mathbf{R_2}) &= \alpha(\mathbf{q_2})S(\mathrm{d}\mathbf{q_2})
\end{aligned}
\end{equation}
Due to the invariance of measure $S$ on $\mathcal{S}^3$, we have
\begin{equation}
    \label{eq:equ_dq}
    S(\mathrm{d}\mathbf{q_1}) = S(\mathrm{d}\mathbf{q_2})
\end{equation}
Define $\nu$ as the mapping from $\mathcal{S}^3$ to $\SO$, \textit{i.e.}, $\mathrm{d}\mathbf{R} = \nu(\mathrm{d}\mathbf{q})$.
Define $\mathbf{h}$ as an element in $\mathcal{S}^3$ satisfying
\begin{equation*}
    \mathbf{h} \mathrm{d}\mathbf{q_1}  = \mathrm{d}\mathbf{q_2}
\end{equation*}
we then induce $\mathbf{\hat{h}}=\nu\circ \mathbf{h}\circ \nu^{-1}$ which is an element in $\SO$, which thus satisfies
\begin{equation*}
    \mathbf{\hat{h}}\nu\left(\mathrm{d}\mathbf{q_{1}}\right)=\nu\left( \mathrm{d}\mathbf{q_{2}}\right)
\end{equation*}
Due to the invariance of measure $T$ on $\SO$ \cite{enwiki:haar}, we have
\begin{equation*}
    T(\mathbf{\hat{h}}\nu\left( \mathrm{d}\mathbf{q_{1}}\right))=
    T\left(\nu\left( \mathrm{d}\mathbf{q_{1}}\right)\right)=T\left(\nu\left(\mathrm{d}\mathbf{q_{2}}\right)\right)
\end{equation*}
\textit{i.e., }
\begin{equation}
    \label{eq:equ_dr}
    T\left(\mathrm{d}\mathbf{R_1}\right) = T\left(\mathrm{d}\mathbf{R_2}\right)
\end{equation}
Considering arbitrary $\mathrm{d}\mathbf{q_{1}}$ and $\mathrm{d}\mathbf{q_{2}}$, and based on Eq. \ref{eq:alpha_q}, \ref{eq:equ_dq} and \ref{eq:equ_dr}, we can derive that $\alpha(\mathbf{q})$ is a constant, \textit{i.e.}, 
\begin{equation}
    \label{eq:equ_d_constant}
    \mathrm{d}\mathbf{R}=\alpha \mathrm{d}\mathbf{q}.
\end{equation}

Known that the Haar measure is uniquely specified by adding the normalization condition \cite{enwiki:haar}, we have
\begin{equation*}
    \int_{\mathbf{R} \in \SO} \mathrm{d} \mathbf{R}=1
\end{equation*}
and based on the definition of unit quaternions,
\begin{equation*}
    \oint_{\mathbf{q} \in \mathcal{S}^{3}} \mathrm{d} \mathbf{q}=\left|\mathcal{S}^3\right| = 2\pi^2
\end{equation*}
According to Eq. \ref{eq:equ_d_constant}, we can derive that
\begin{equation*}
    \mathrm{d}\mathbf{R} = \frac{1}{2\pi^2} \mathrm{d}\mathbf{q}
\end{equation*}
as claimed.
\end{proof}

Let $\mathbf{I}_n$ be the n-dimensional identity matrix, and $\mathbf{\epsilon}_i, i=1, 2, \dots, n$ be the columns of $\mathbf{I}_n$.
Let $\mathbf{E}_i=2\mathbf{\epsilon}_i\mathbf{\epsilon}_i^T-\mathbf{I}_3, i=1,2,3$ and $\mathbf{E}_4=\mathbf{I}_3$.
Define $Q(\mathbf{X})$ for a $3\times 3$ rotation matrix as 
\begin{equation}
\label{eq:3.3}
    4Q(\mathbf{X}) = 4\mathbf{x}\mathbf{x}^T-\mathbf{I}_4
\end{equation}
where $\mathbf{x} = \gamma^{-1}(\mathbf{X})$.
Apply \textit{proper} singular value decomposition \cite{lee2018bayesian, mohlin2020probabilistic} to $\mathbf{A}$ as Eq. \ref{eq:supp_proper}
\begin{equation*}
    \mathbf{A} = \mathbf{U}\mathbf{S}\mathbf{V}^T
\end{equation*}
where $\mathbf{U}$ and $\mathbf{V}$ are guaranteed to be rotation matrices and $\mathbf{S}$ contains the \textit{proper} singular values with $s_{1} \geq s_{2} \geq\left|s_{3}\right|$.
Define $T(\mathbf{A})$ for any real $3\times 3$ matrix $\mathbf{A}$ as 
\begin{equation}
    \label{eq:3.4}
    4T(\mathbf{A}) = \sum_{i=1}^{4} z_i Q(\mathbf{UE_iV}).
\end{equation}
Let $z_1, z_2, z_3, z_4$ denote the entries of $\mathbf{Z}$ and $\mathbf{m}_1, \mathbf{m}_2, \mathbf{m}_3, \mathbf{m}_4$ denote the columns of $\mathbf{M}$.

\begin{proposition}
Suppose the parameters satisfy the following relationships
\begin{equation}
    \label{eq:z&s}
    \mathbf{Z} = 4T(\mathbf{S})
\end{equation}
\begin{equation}
    \mathbf{m}_i=\gamma^{-1}(\mathbf{U}\mathbf{E}_i\mathbf{V}^T), i=1,2,3,4
\end{equation}
and the inputs
\begin{equation*}
    \mathbf{R}= \gamma(\mathbf{q}),
\end{equation*}
matrix Fisher distribution is equivalent to Bingham distribution with the relationship
\begin{equation}
    \operatorname{tr}(\mathbf{A} \mathbf{R}^T) = \mathbf{q}^T \mathbf{M} \mathbf{Z} \mathbf{M}^T \mathbf{q}
\end{equation}
and
\begin{equation}
    \label{eq:equ_p_2pi2}
    p_F(\mathbf{R}) = 2\pi^2 p_B(\mathbf{q})
\end{equation}
\end{proposition}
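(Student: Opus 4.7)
The plan is to establish the scalar identity $\operatorname{tr}(\mathbf{A}\mathbf{R}^T) = \mathbf{q}^T\mathbf{M}\mathbf{Z}\mathbf{M}^T\mathbf{q}$ first, and then obtain the pdf scaling as an immediate corollary using the measure relation $\mathrm{d}\mathbf{R} = \mathrm{d}\mathbf{q}/(2\pi^2)$ established in the preceding proposition.

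For the scalar identity I would reduce to the diagonal case via the proper SVD $\mathbf{A} = \mathbf{U}\mathbf{S}\mathbf{V}^T$ with $\mathbf{U},\mathbf{V}\in\SO$. Setting $\mathbf{R}' = \mathbf{U}^T\mathbf{R}\mathbf{V}\in\SO$ and $\mathbf{q}' = \gamma^{-1}(\mathbf{R}')$, the cyclic property of the trace gives $\operatorname{tr}(\mathbf{A}\mathbf{R}^T) = \operatorname{tr}(\mathbf{S}\mathbf{R}'^T)$. Expanding this as $s_1 R'_{11} + s_2 R'_{22} + s_3 R'_{33}$, substituting the explicit quaternion formulas for the diagonal entries of $\mathbf{R}'$, and using $\|\mathbf{q}'\|^2 = 1$ to supply a $w'^2$ term, rewrites it as a diagonal quadratic form whose coefficients are precisely $\operatorname{tr}(\mathbf{S}\mathbf{E}_i)$ for $i = 1,2,3,4$. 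Writing $z_i := \operatorname{tr}(\mathbf{S}\mathbf{E}_i)$ (matching the definition $\mathbf{Z} = 4T(\mathbf{S})$) and $\mathbf{f}_i := \gamma^{-1}(\mathbf{E}_i)$ (which is a permutation of the standard basis of $\mathbb{R}^4$ since $\mathbf{E}_4$ is the identity and $\mathbf{E}_{1,2,3}$ are $\pi$-rotations about the coordinate axes), this reads $\operatorname{tr}(\mathbf{S}\mathbf{R}'^T) = \sum_i z_i(\mathbf{f}_i^T\mathbf{q}')^2$.

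The bridge back to the Bingham form uses the quaternion half-angle identity: for any rotations $\mathbf{P},\mathbf{Q}$ with unit quaternion lifts $\mathbf{p},\mathbf{q}$, one has $(\mathbf{p}^T\mathbf{q})^2 = (1+\operatorname{tr}(\mathbf{P}\mathbf{Q}^T))/4$, which follows because $\mathbf{p}^T\mathbf{q}$ is the scalar part of the quaternion product $\bar{\mathbf{p}}\ast\mathbf{q}$, equal to $\cos(\theta/2)$ for the relative rotation of angle $\theta$, while $\operatorname{tr}(\mathbf{P}\mathbf{Q}^T) = 1+2\cos\theta$. Applied on both sides, $(\mathbf{m}_i^T\mathbf{q})^2 = (1+\operatorname{tr}(\mathbf{U}\mathbf{E}_i\mathbf{V}^T\mathbf{R}^T))/4$ and $(\mathbf{f}_i^T\mathbf{q}')^2 = (1+\operatorname{tr}(\mathbf{E}_i\mathbf{R}'^T))/4$; the two inner traces agree by cyclicity, so $(\mathbf{m}_i^T\mathbf{q})^2 = (\mathbf{f}_i^T\mathbf{q}')^2$ for each $i$. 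Summing with weights $z_i$ yields $\mathbf{q}^T\mathbf{M}\mathbf{Z}\mathbf{M}^T\mathbf{q} = \sum_i z_i(\mathbf{f}_i^T\mathbf{q}')^2 = \operatorname{tr}(\mathbf{S}\mathbf{R}'^T) = \operatorname{tr}(\mathbf{A}\mathbf{R}^T)$, where the would-be additive constant $\tfrac14\sum_i z_i$ vanishes by the algebraic identity $\sum_i \operatorname{tr}(\mathbf{S}\mathbf{E}_i) = 0$.

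For the pdf equivalence, pointwise matching of the exponents combined with $\mathrm{d}\mathbf{R} = \mathrm{d}\mathbf{q}/(2\pi^2)$ forces the normalizers to satisfy $F(\mathbf{A}) = F(\mathbf{Z})/(2\pi^2)$, yielding $p_F(\mathbf{R}) = 2\pi^2 p_B(\mathbf{q})$ at once. The main obstacle I foresee is purely bookkeeping: the convention $\mathbf{E}_4 = \mathbf{I}_3$ versus $\mathbf{E}_{1,2,3}$ being the $\pi$-rotations twists the correspondence between the $\mathbf{f}_i$'s and the coordinates $(w',x',y',z')$ of $\mathbf{q}'$, so verifying that the diagonal form computed by brute expansion matches $\sum_i z_i \mathbf{f}_i\mathbf{f}_i^T$ under this permutation, together with the cancellation $\sum_i z_i = 0$, requires scrupulous index-tracking. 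Once those are pinned down, the rest is direct substitution and change of measure.
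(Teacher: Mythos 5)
Your proposal is correct and follows essentially the same route as the paper: decompose $\mathbf{A}$ over the $\mathbf{E}_i$ basis with weights $z_i=\operatorname{tr}(\mathbf{S}\mathbf{E}_i)$ summing to zero, convert traces to squared quaternion inner products (your half-angle identity $(\mathbf{p}^T\mathbf{q})^2=(1+\operatorname{tr}(\mathbf{P}\mathbf{Q}^T))/4$ is exactly what the paper's operator $4Q(\mathbf{X})=4\mathbf{x}\mathbf{x}^T-\mathbf{I}_4$ encodes), and obtain the pdf scaling from the constant measure ratio $\mathrm{d}\mathbf{R}=\mathrm{d}\mathbf{q}/(2\pi^2)$. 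The only point worth adding, which the paper checks explicitly, is that the columns $\mathbf{m}_i$ are mutually orthogonal (since $(\mathbf{m}_i^T\mathbf{m}_j)^2=(1+\operatorname{tr}(\mathbf{E}_i\mathbf{E}_j))/4=0$ for $i\neq j$), so $\mathbf{M}\in\mathrm{O}(4)$ and the Bingham normalizing constant in your final step indeed depends only on $\mathbf{Z}$.
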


\begin{proof}
Assume $\mathbf{S}=\operatorname{diag}(s_1, s_2, s_3)$ then we may write 
\begin{equation*}
    4\mathbf{A}=\sum_{i=1}^4 z_i\mathbf{U}\mathbf{E}_i\mathbf{V}^T
\end{equation*}
uniquely, with
\begin{equation*}
\begin{aligned}
    z_1&=s_1 - s_2 - s_3 \\
    z_2 &= s_2 - s_1 - s_3 \\
    z_3 &= s_3 - s_1 - s_2 \\
    z_4 &= - z_1 - z_2 - z_3.
\end{aligned}
\end{equation*}
Also, since $4\mathbf{E}_i = 3\mathbf{E}_i -\sum_j \mathbf{E}_j, i \neq j$, 
Eq. \ref{eq:3.4} agrees with Eq. \ref{eq:3.3} on SO(3).
Assmue $\gamma(\mathbf{m}_i)=\mathbf{U}\mathbf{E}_i\mathbf{V}^T, i=1,2,3,4$, then $\mathbf{m}_i$ are mutually orthogonal, since $\operatorname{tr}\left(\gamma(\mathbf{m}_i)\gamma(\mathbf{m}_j)^T\right)=-1$ if $i\neq j$. Hence we may write
\begin{equation*}
    4T(\mathbf{A})=\mathbf{M}\mathbf{Z}\mathbf{M}^T
\end{equation*}
where $\mathbf{Z}=\operatorname{diag}(z_1, z_2, z_3, z_4)$ has a zero trace and $\mathbf{M} = (\mathbf{m}_1, \mathbf{m}_2, \mathbf{m}_3, \mathbf{m}_4)$ in SO(4).
Note that 
\begin{equation*}
4T(\mathbf{S})=\mathbf{Z}
\end{equation*}
and 
\begin{equation*}
    4\operatorname{tr}(\mathbf{A} \mathbf{R}^T) = \sum_{i=1}^4 z_i \operatorname{tr}(\mathbf{U}\mathbf{E}_i\mathbf{V}^T\mathbf{R}^T),
\end{equation*}
we have
\begin{equation}
    \label{eq:equ_exponent}
    \operatorname{tr}(\mathbf{A}\mathbf{R}^T) = \mathbf{q}^T\mathbf{M}\mathbf{Z}\mathbf{M}^T\mathbf{q}
\end{equation}

Due to the scaling factor from unit quaternions to rotation matrices is constant (See Prop. \ref{prop:constant_d}),
matrix Fisher distribution is equivalent to Bingham distribution.
Based on Eq.  \ref{eq:equ_exponent} and \ref{eq:equ_d_2pi2}, and the conservation of the total probability, it can be shown that
\begin{equation*}
    p_F(\mathbf{R}) = 2\pi^2 p_B(\mathbf{q})
\end{equation*}
as claimed.
\end{proof}

Note that the proposition can also be verified by the relationships between the normalization constant $F_B(\mathbf{Z})$ and $F_F(\mathbf{A})$.
As discussed in \cite{lee2018bayesian, mohlin2020probabilistic, deng2020deep}, when $\mathbf{Z}$ satisfies Eq. \ref{eq:z&s}, the constant
\begin{equation*}
\begin{aligned}
    F_B(\mathbf{Z})=\oint_{\mathbf{q} \in \mathbb{S}^{3}} \exp \left(\mathbf{q}^{T} \mathbf{M Z M}^{T} \mathbf{q}\right) \mathrm{d} \mathbf{q}&=\left|\mathbb{S}^3\right|{ }_{1} F_{1}\left(\frac{1}{2}, 2, \mathbf{Z}\right) \\
    &=2\pi^2{ }_{1} F_{1}\left(\frac{1}{2}, 2, \mathbf{Z}\right)
\end{aligned}
\end{equation*}
and
\begin{equation*}
\begin{aligned}
    F_F(\mathbf{A})=\int_{\mathbf{R} \in \SO} \exp \left(\operatorname{tr}\left(\mathbf{A}^{T} \mathbf{R}\right)\right) \mathrm{d} \mathbf{R}={ }_{1} F_{1}\left(\frac{1}{2}, 2, \mathbf{Z}\right)
\end{aligned}
\end{equation*}
where ${ }_{1} F_{1}(\cdot, \cdot, \cdot)$ is the generalized hypergeometric function \cite{koev2006efficient} of a matrix argument. So 
\begin{equation*}
\label{eq:constant}
    F_F(\mathbf{Z}) = \frac{1}{2\pi^2} F_F(\mathbf{A}).
\end{equation*}
Considering Eq. \ref{eq:equ_exponent}, we have
\begin{equation*}
    p_F(\mathbf{R}) = 2\pi^2 p_B(\mathbf{q})
\end{equation*}

\subsection{Normalization Constant of Matrix Fisher Distribution}
We follow \cite{mohlin2020probabilistic} to compute the normalization constant.
As pointed in \cite{lee2018bayesian}, the normalizing constant of matrix Fisher distribution can be expressed as a one dimensional integral over Bessel functions as
\begin{equation*}
\begin{aligned}
c(S)=& \int_{-1}^{1} \frac{1}{2} I_{0}\left[\frac{1}{2}\left(s_{i}-s_{j}\right)(1-u)\right] \\
& \times I_{0}\left[\frac{1}{2}\left(s_{i}+s_{j}\right)(1+u)\right] \exp \left(s_{k} u\right) d u
\end{aligned}
\end{equation*}
and 
\begin{equation*}
\begin{aligned}
\frac{\partial c(S)}{\partial s_{i}}=& \int_{-1}^{1} \frac{1}{4}(1-u) I_{1}\left[\frac{1}{2}\left(s_{i}-s_{j}\right)(1-u)\right] \\
& \times I_{0}\left[\frac{1}{2}\left(s_{i}+s_{j}\right)(1+u)\right] \exp \left(s_{k} u\right) \\
&+\frac{1}{4}(1+u) I_{0}\left[\frac{1}{2}\left(s_{i}-s_{j}\right)(1-u)\right] \\
& \times I_{1}\left[\frac{1}{2}\left(s_{i}+s_{j}\right)(1+u)\right] \exp \left(s_{k} u\right) d u \end{aligned}
\end{equation*}

\begin{equation*}
\begin{aligned}
\frac{\partial c(S)}{\partial s_{j}}=& \int_{-1}^{1}-\frac{1}{4}(1-u) I_{1}\left[\frac{1}{2}\left(s_{i}-s_{j}\right)(1-u)\right] \\
& \times I_{0}\left[\frac{1}{2}\left(s_{i}+s_{j}\right)(1+u)\right] \exp \left(s_{k} u\right) \\
&+\frac{1}{4}(1+u) I_{0}\left[\frac{1}{2}\left(s_{i}-s_{j}\right)(1-u)\right] \\
& \times I_{1}\left[\frac{1}{2}\left(s_{i}+s_{j}\right)(1+u)\right] \exp \left(s_{k} u\right) d u
\end{aligned}
\end{equation*}

\begin{equation*}
\begin{aligned}
\frac{\partial c(S)}{\partial s_{k}}=& \int_{-1}^{1} \frac{1}{2} I_{0}\left[\frac{1}{2}\left(s_{i}-s_{j}\right)(1-u)\right] \\
& \times I_{0}\left[\frac{1}{2}\left(s_{i}+s_{j}\right)(1+u)\right] u \exp \left(s_{k} u\right) d u
\end{aligned}
\end{equation*}
for any $(i, j, k) \in \mathcal{I}$.

We approximate this integral using the trapezoid rule, where in experiments, 511 trapezoids are used. We use standard polynomials to approximate the Bessel function using Horner’s method. 

Please see Section 5 of \cite{mohlin2020probabilistic}'s supplementary for more details.

\section{More Experiment Results}
\label{sec:supp_results}

\subsection{Results on ModelNet10-SO(3) Dataset with 100\% Labeled Data}

Although out of the scope of semi-supervised learning, following \cite{wang20213dioumatch, mariotti2020semi}, we also report the results on 100\% labeled data on ModelNet10-SO(3) dataset, where we simply make a copy of the full training data as unlabeled data and train our model. All the other settings are kept the same as Table 1 in the main paper.

As shown in Table \ref{tab:100}, our proposed FisherMatch is able to further encourage a better performance with 100\% labeled data compared with the supervised learning and consistently outperforms other baselines. The results further demonstrate the importance of filtering high-quality pseudo labels even with much training data. The improvements can be seen as a result of label smoothing\cite{yuan2020revisiting}.

\subsection{Experiments and Results on Objectron Dataset}
\noindent\textbf{Dataset} \ Objectron \cite{ahmadyan2021objectron} is a newly-introduced dataset captured in the real world. The dataset contains a collection of short, object-centric video clips, as well as the corresponding camera poses, sparse point clouds, and manually annotated 3D bounding boxes for each object.

In this experiment, we mainly focus on the  \texttt{bike} and \texttt{camera} categories which exhibit more rotational variations and less rotational symmetries in the dataset \cite{ahmadyan2021objectron}. 
Since the real-world images are mostly captured from limited viewpoints, we found a smaller generalization gap between the train/test data. Thus, we choose a more challenging scenario to only adopt 1\% labeled data to train the network. 
We adopt the official train-test split of the dataset, where we grab all the frames of the training videos and uniformly sample 10\% frames from the test videos. We further divide the training split into the labeled set with ground truth and the unlabeled set without ground truth.

\setlength{\tabcolsep}{12pt}
\begin{table}[t]
\footnotesize
  \centering
  \caption{\textbf{Comparing our proposed FisherMatch with the baselines on ModelNet10-SO(3) dataset under 100\% labeled data.}}
    \begin{tabular}{c|l|cc}
    \multirow{2}{*}{Category} & \multicolumn{1}{c}{\multirow{2}{*}{Method}} & \multicolumn{2}{|c}{100\%} \\
\cmidrule{3-4}          &       & \multicolumn{1}{l}{Mean$\downarrow$} & \multicolumn{1}{l}{Med.$\downarrow$} \\
    \midrule
    \multirow{4}{*}{Sofa} & Sup.-L1 & 19.28 & 6.64 \\
                & Sup.-Fisher & 18.62 & 5.77 \\
                & SSL-L1-Consist. &   17.18    &  5.27 \\
                & SSL-FisherMatch &    \textbf{14.37}   &  \textbf{4.32}  \\
    \midrule
    \multirow{4}{*}{Chair} & Sup.-L1 & 17.65 & 7.48 \\
                & Sup.-Fisher & 17.38 & 6.78 \\
                & SSL-L1-Consist. &   14.78    & 6.19 \\
                & SSL-FisherMatch &   \textbf{13.01}    &  \textbf{5.35} \\
    \bottomrule
    \end{tabular}
  \label{tab:100}
\end{table}
\setlength{\tabcolsep}{12pt}
\begin{table}[t]
\footnotesize
  \centering
  \caption{\textbf{Comparing our proposed FisherMatch with the baselines on Objectron  dataset with 1\% labeled data.}}
    \begin{tabular}{c|l|cc}
    \multirow{2}{*}{Category} & \multicolumn{1}{c}{\multirow{2}{*}{Method}} & \multicolumn{2}{|c}{1\%} \\
\cmidrule{3-4}          &       & \multicolumn{1}{l}{Mean$\downarrow$} & \multicolumn{1}{l}{Med.$\downarrow$} \\
    \midrule
    \multirow{5}{*}{Bike} 
                & Sup.-L1     & 53.6 & 21.2 \\
                & Sup.-Fisher & 51.2 & 24.0 \\
                & SSL-L1-Consist. &     38.0           &    14.3        \\
                & SSL-FisherMatch &    \textbf{36.0}   &  \textbf{13.8}  \\
    \cmidrule{2-4}       & Full sup. & 26.7   & 9.7   \\
    \midrule
    \multirow{5}{*}{Camera} 
                & Sup.-L1     & 46.0  & 22.8  \\
                & Sup.-Fisher & 39.0 & 18.7 \\
                & SSL-L1-Consist. &    40.9            &    19.0       \\
                & SSL-FisherMatch &   \textbf{33.6}    &  \textbf{15.9} \\
    \cmidrule{2-4}       & Full sup. & 24.4   & 9.5   \\
    \bottomrule
    \end{tabular}
  \label{tab:objectron}
\end{table}

\noindent\textbf{Data preprocessing} \ To leverage this dataset for object pose regression, we need to obtain the paired data, \textit{i.e.}, object-centered images with their corresponding object poses.
We thus first project the eight corners of 3D bounding box annotations onto the 2D image plane, fit a minimum 2D square bounding box covering all the projected corners, and finally crop the image with the fitted 2D bounding box. 
To avoid the naive cropping-resizing flaws pointed out in \cite{mohlin2020probabilistic}, we directly crop square images to meet the shape requirement of the network. We pad the images with a black background to cover the out-of-plane projected keypoints and images with more than 4 (out of 8) keypoints out of the image plane are discarded.
To obtain the ground-truth object poses, we compute the rotation of the annotated 3D object bounding box wrt. the box with the same size in the canonical orientation.

\noindent\textbf{Experiment settings} \ The baselines, evaluation metrics and implementation details are the same as experiments on ModelNet10-SO(3) dataset.

\noindent\textbf{Results} \ The results are shown in Table \ref{tab:objectron}. Our FisherMatch significantly increases the regression performance even with a really low labeled data ratio, further demonstrating the efficiency of our model.

\section{Visualization of Matrix Fisher Distribution}
\label{sec:supp_vis}

\begin{figure}[t]
    \centering
    \begin{subfigure}[t]{0.3\linewidth}
        \centering
        \includegraphics[width=\linewidth]{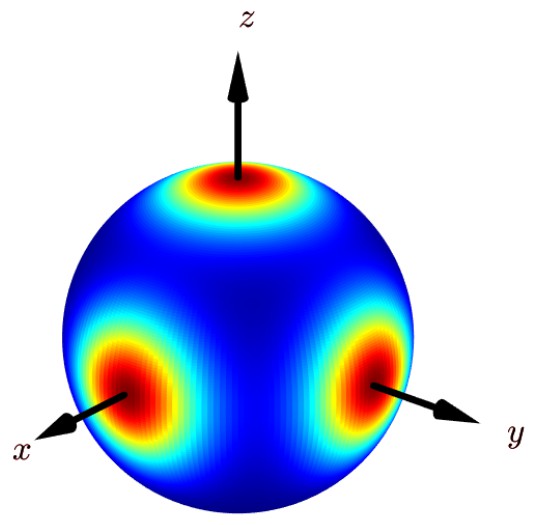}
        \caption{diag(5, 5, 5)}
        \label{indication_entropy}
    \end{subfigure}
    \hspace{2mm}
    \begin{subfigure}[t]{0.3\linewidth}
        \centering
        \includegraphics[width=\linewidth]{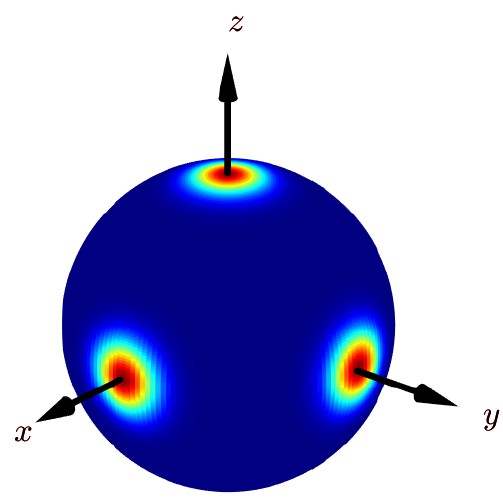}
        \caption{diag(20, 20, 20)}
    \end{subfigure}
    \hspace{2mm}
    \begin{subfigure}[t]{0.3\linewidth}
        \centering
        \includegraphics[width=\linewidth]{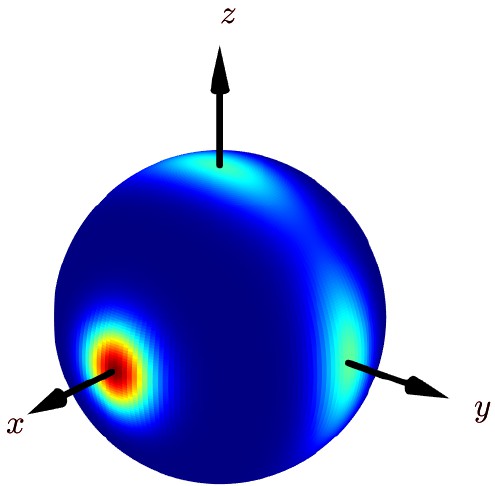}
        \caption{diag(20, 1, 1)}
    \end{subfigure}
	\caption{\textbf{Visualization of the pdf of matrix Fisher distribution} with \textit{jet} color-coding. The captions below the plots indicate the parameter $\mathbf{A}$ of the distribution.}
	\label{fig:vis}
\end{figure}

\begin{figure}[t]
	\begin{center}
		\begin{tabular}{cccc}
		\hspace{-4mm} \includegraphics[width=0.24\linewidth]{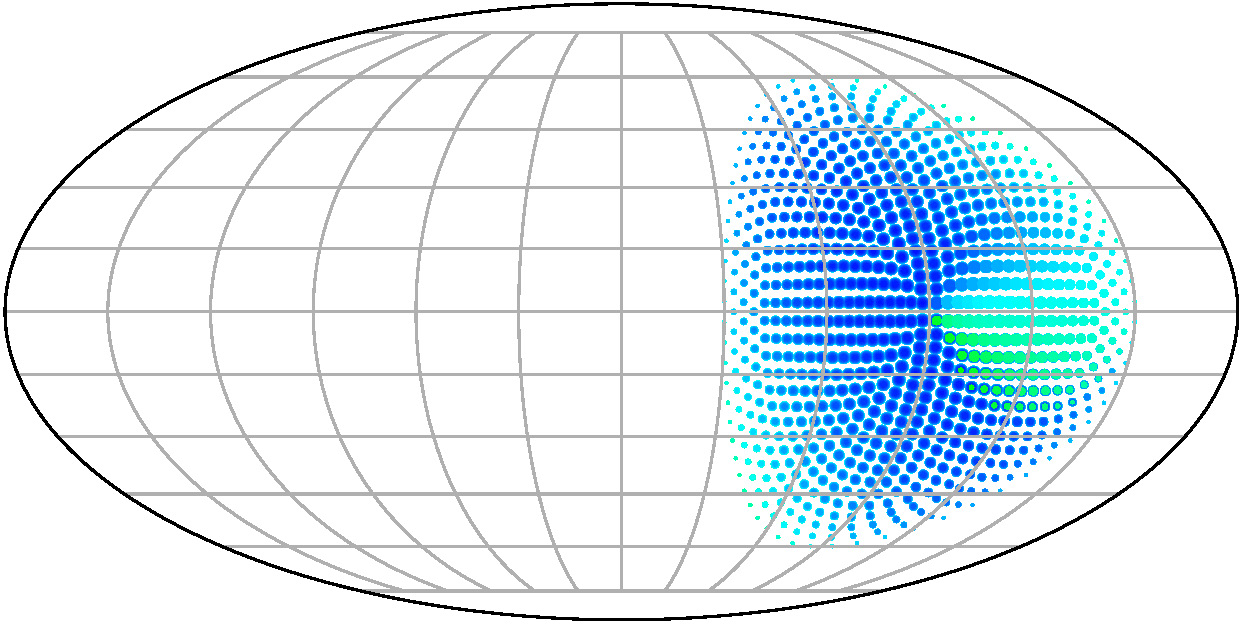} \hspace{-6mm}
			& \includegraphics[width=0.24\linewidth]{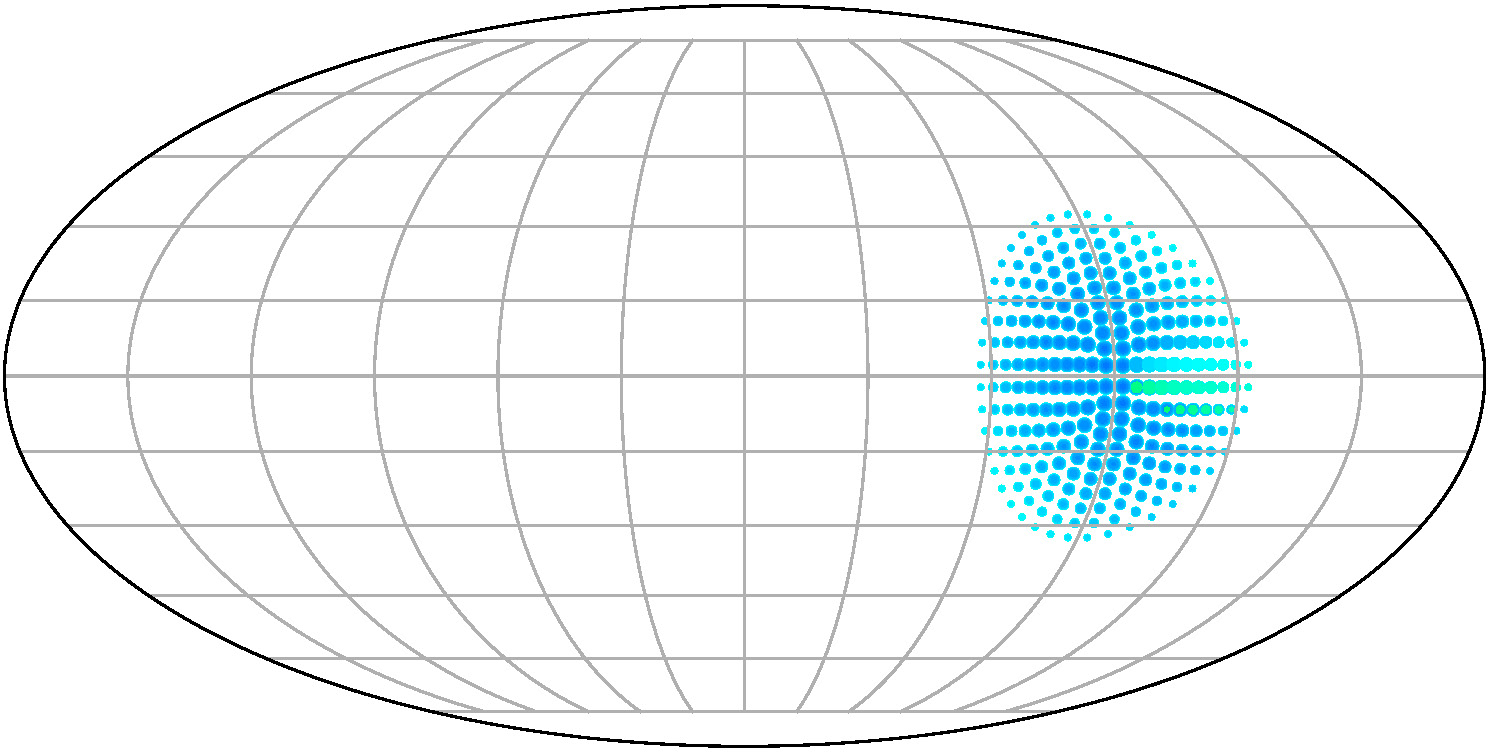}    \hspace{-6mm}
			& \includegraphics[width=0.24\linewidth]{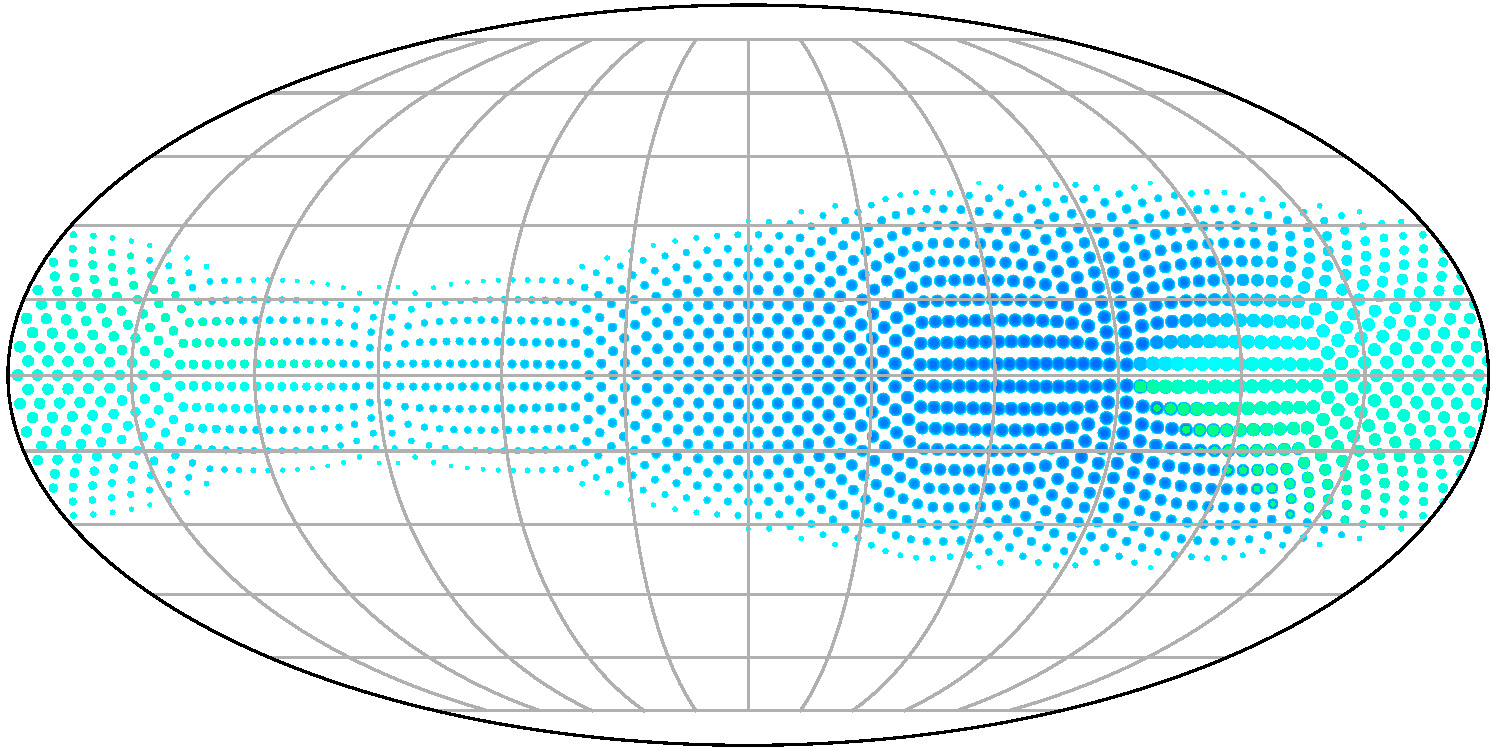} \hspace{-6mm}
			& \includegraphics[width=0.14\linewidth]{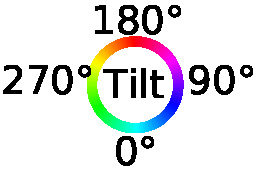} \\
		\hspace{-6mm} \scriptsize{(a) diag(5, 5, 5)}  \hspace{-6mm} 
		& \scriptsize{(b) diag(20, 20, 20)}  \hspace{-6mm}  
		& \scriptsize{(c) diag(20, 1, 1)} \hspace{-6mm}  & 
		\end{tabular}
	\end{center}
	\vspace{-4mm}
	\caption{\textbf{Visualization of the pdf of matrix Fisher distribution} with the visualization method proposed in Implicit-PDF \cite{murphy2021implicit}. The captions below the plots indicate the parameter $\mathbf{A}$ of the distribution.}
	\label{fig:vis1}
\end{figure}

Visualizing matrix Fisher distribution is non-trivial over SO(3). Following \cite{mohlin2020probabilistic,lee2018bayesian}, we visualize the probabilistic distribution function via color-coding on the sphere. 

Remember that for the parameter $\mathbf{A}$ in matrix Fisher distribution, the singular values indicate the strength of concentration. The larger a singular value $s_i$ is, the more concentrated the distribution is along the corresponding axis. Fig \ref{fig:vis} shows three distributions with the same mode as the identity matrix, differing only in the strength of concentration.
For both (a) and (b), the distributions of each axis are identical and circular, while the distribution in (b) is more concentrated than (a). In (c), the distribution is more concentrated in $x$-axis, and the distributions for the other two axes are elongated.

Implicit-PDF \cite{murphy2021implicit} proposes a new visualization method to display distributions over $\SO$ by discretizing $\SO$ with the help of Hopf fibration\cite{yershova2010generating}. 
It projects a great circle of points on $\SO$ to each point on the 2-sphere and uses the color wheel to indicate the location on the great circle. We re-draw Figure \ref{fig:vis} with this visualization in Figure \ref{fig:vis1}.

\end{document}